\documentclass[anon,12pt]{jmlr-style} 

\title[On Randomized Algorithms in Online Strategic Classification]{On Randomized Algorithms in Online Strategic Classification}
\usepackage{times}
\usepackage{tikz}
\usepackage{amssymb}
\usetikzlibrary{calc}
\usepackage{comment}
\usepackage{hyperref}
\usepackage{cleveref}
\usepackage{paralist}
\crefname{definition}{definition}{definitions}
\Crefname{definition}{Definition}{Definitions}

\coltauthor{%
 \Name{Chase Hutton} \Email{chutton6@umd.edu}\\
 \addr University of Maryland
 \AND
 \Name{Adam Melrod} \Email{amelrod@math.harvard.edu}\\
 \addr Harvard University
 \AND
 \Name{Han Shao} \Email{hanshao@umd.edu}\\
 \addr University of Maryland%
}

\newcommand{\mc}{\mathcal}

\newcommand{\Ind}{\mathbf{1}}
\newcommand{\E}{\mathbb E}
\newcommand{\inner}[2]{\left\langle #1,#2\right\rangle}
\DeclareMathOperator{\ldim}{Ldim}

\renewcommand{\paragraph}[1]{\medskip\noindent{\bf #1}\xspace}

\begin{document}

\maketitle

\begin{abstract}%
Online strategic classification studies settings in which agents strategically modify their features to obtain favorable predictions. For example, given a classifier that determines loan approval based on credit scores, applicants may open or close credit cards and bank accounts to obtain a positive prediction. The learning goal is to achieve low mistake or regret bounds despite such strategic behavior.

While randomized algorithms have the potential to offer advantages to the learner in strategic settings, they have been largely underexplored. In the realizable setting, no lower bound is known for randomized algorithms, and existing lower bound constructions for deterministic learners can be circumvented by randomization. In the agnostic setting, the best known regret upper bound is $O(T^{3/4}\log^{1/4}(T|\mathcal H|))$ due to \cite{fundamental-bounds}, which is far from the standard online learning rate of $O(\sqrt{T\log|\mathcal H|})$.

In this work, we provide refined upper and lower bounds for online strategic classification in both the realizable and agnostic settings; our bounds depend on the Littlestone dimension $\ldim(\mc H)$ of the hypothesis class $\mc H$ and the maximum degree $\Delta$ of the manipulation graph. In the realizable setting, using a new construction, we prove a lower bound that, for $T > \ldim(\mc H) \Delta^2$, extends the existing deterministic lower bound of $\Omega(\ldim(\mathcal{H}) \Delta)$ to all algorithms. This is the first lower bound that applies to randomized algorithms, resolving an open question of~\cite{fundamental-bounds}. We also give the first randomized algorithm that improves on the known deterministic upper bound of $O(\ldim(\mathcal H)\cdot\Delta\log\Delta)$, achieving $O(\sqrt{T\cdot\ldim(\mathcal H)\log\Delta})$ expected mistakes, in the regime $T < \ldim(\mathcal H)\,\Delta^2\log\Delta$.

In the agnostic setting, we give an improper randomized algorithm with expected regret \\$O(\sqrt{T\log|\mathcal H|})$ against adaptive adversaries, improving upon the previous $O(T^{3/4}\log^{1/4}(T|\mathcal H|))$ bound and matching the standard online learning rate. We also prove that this optimal rate requires improper learning.
\end{abstract}

\begin{keywords}%
strategic classification, online learning, Littlestone dimension.
\end{keywords}

\section{Introduction}

Strategic classification lies at the intersection of machine learning and game theory and studies learning problems in which self-interested agents strategically modify their observable features to obtain favorable predictions \citep{bruckner2011stackelberg, hardt2016strategic}. Such behavior arises naturally in many high-stakes decision-making systems, including loan approval, hiring, and college admissions. For example, loan applicants may open or close credit cards to improve their credit scores, while students may retake standardized tests or adjust course selections to increase their chances of admission. This introduces a more complex learning environment, in which agents strategically adapt their features to influence the outcome of the deployed classifier while preserving their true underlying qualifications, such as being successful in college or repaying a loan.

These challenges are further amplified in online environments~\citep{fundamental-bounds}, where a learner repeatedly interacts with a sequence of agents and updates its classifier over time with limited information. In particular, the learner perceives agents only through the lens of the currently deployed hypothesis and does not always have the ability to counterfactually infer how agents would have manipulated under a different hypothesis. As a result, the learner may not know the loss incurred by unimplemented hypotheses and thus cannot obtain full-information feedback.

In this work, we model the set of feasible feature manipulations by a known \emph{manipulation graph} $G=(\mathcal X,E)$. The vertices of $G$ are feature vectors, and an edge $(x,x')\in E$ indicates that an agent with original feature vector $x$ can manipulate to feature vector $x'$.  While much prior work on online strategic classification with manipulation graphs focuses on deterministic algorithms \citep{strat-little-stone-dim, pmlr-v247-cohen24c}, randomized algorithms have been largely underexplored.

As introduced by \cite{fundamental-bounds}, there are two models for randomized learning in strategic settings, depending on whether the learner’s random choice of classifier is revealed to the agent before responding. In the \emph{fractional model}, the learner samples a classifier from a distribution but does not reveal the realization, so the agent best responds to the distribution itself. In the \emph{randomized algorithms model}, the learner reveals the sampled classifier, and the agent best responds to this realization. As shown by \cite{fundamental-bounds}, revealing the realization is strictly more advantageous: when the learner does not reveal the sampled classifier, the learner may suffer linear regret, whereas revealing it allows for sublinear regret. Accordingly, in this work we focus on the randomized algorithms model.

While recent work has established mistake and regret guarantees in several settings \citep{fundamental-bounds, strat-little-stone-dim, pmlr-v247-cohen24c}, several important questions remain open.

One question concerns the role of randomization in the realizable setting. In the non-strategic realizable setting, it is well known that randomization does not improve the mistake bound, and the optimal algorithm is deterministic, namely the Standard Optimal Algorithm (SOA). In contrast, in the strategic setting, prior work \citep{fundamental-bounds, pmlr-v247-cohen24c} has only established lower bounds for deterministic algorithms. Moreover, under the same lower bound construction, \cite{strat-little-stone-dim} shows that randomization can circumvent these lower bounds and significantly improve mistake guarantees. This leads to the following question:
\begin{center}
\textbf{Does randomization always improve mistake bounds in the realizable strategic setting?}
\end{center}

A second question concerns regret bounds in the agnostic setting. In \cite{fundamental-bounds}, the authors provide regret upper bounds of $O(T^{3/4}\log^{1/4}T|\mathcal H|)$ by using an all-positive classifier at randomly chosen time steps to obtain full information, and $O(\sqrt{T|\mathcal H|\log|\mathcal H|})$ by applying standard bandit algorithms such as EXP3. Matching lower bounds, however, are left open. Since it is well known that a regret of $O(\sqrt{T \log|\mathcal H|})$ is achievable in the standard online learning setting, this naturally raises the question:
\begin{center}
\textbf{Can we also achieve a regret guarantee of $O(\sqrt{T \log|\mathcal H|})$ in the agnostic strategic setting?}
\end{center}

In this work, we provide refined upper and lower bounds for online strategic classification in both the realizable and agnostic settings, thereby answering the two questions above.  Our bounds depend on the Littlestone dimension $\ldim(\mc H)$ of the hypothesis class $\mc H$ and the maximum degree $\Delta$ of the manipulation graph. In the realizable setting, we show that every learner incurs at least $\Omega(\min\{\sqrt{T \ldim(\mc H)},\  \ldim(\mc H) \Delta\})$.  
As a consequence, for $T > \ldim(\mathcal{H})\Delta^2$, randomized learners cannot improve upon the standard deterministic lower bound of $\Omega(\ldim(\mathcal{H})\Delta)$, thereby resolving the first question in the negative.
On the other hand, we construct a randomized algorithm that achieves a mistake bound of $O(\sqrt{T \cdot \ldim(\mathcal H) \log \Delta})$, strictly improving upon the best known deterministic upper bounds when $T$ is small, and providing a partial complementary answer to the first question.

In the agnostic setting, we answer the second question in the affirmative: we give an improper algorithm that achieves a regret bound of $O(\sqrt{T \log |\mathcal H|})$, matching the standard (non-strategic) online learning rate up to constants. Since non-strategic instances are a special case, the $\Omega(\sqrt{T \log |\mathcal H|})$ lower bound of \cite{ben2009agnostic} carries over, so this rate is optimal. We further show that improperness is necessary to attain it: we prove a lower bound of $\Omega(\min\{\sqrt{T \log |\mathcal H|} + |\mathcal H|,\ \sqrt{T |\mathcal H| \log |\mathcal H|}\})$ for all proper learners, which exceeds the optimal rate whenever $T < |\mathcal H|^2 / \log |\mathcal H|$.

\subsection{Related work}
Strategic classification was first studied in the distributional setting by \cite{hardt2016strategic}. Since then, researchers have considered different models of agents' behavior, learning tasks, and analytical perspectives.

\paragraph{Online strategic classification.}
Our work studies the online strategic learning setting.  The works of \cite{pmlr-v247-cohen24c, strat-little-stone-dim} also focus on improving the regret bounds established by \cite{fundamental-bounds} for online strategic classification. However, both primarily aim to extend results from finite hypothesis classes to infinite hypothesis classes and focus on deterministic algorithms. The only result concerning randomized algorithms in \cite{strat-little-stone-dim} shows that randomization can circumvent a lower bound for deterministic algorithms in a specific example.
\citet{chen2020learning} study online strategic linear classification in a continuous
(geometric) setting where agents may manipulate their features within a bounded-radius
ball. Both their work and ours share the high-level idea of exploiting the manipulation
structure to extract richer-than-bandit feedback. However, the settings and techniques
differ in important ways. \cite{chen2020learning} employs geometric arguments that have no clear analogue in our combinatorial,
graph-based setting.  Moreover, their regret bound is data-dependent, scaling with the
Lebesgue measure of the smallest induced polytope, which can degrade to $\Omega(T)$ in
adversarial instances.  In contrast, our agnostic bound of
$O(\sqrt{T \log \lvert H \rvert})$ is data-independent and sublinear for all instances.
Additionally, the problem of achieving optimal agnostic regret for general hypothesis
classes under manipulation graphs was explicitly left open by \citet{fundamental-bounds}; Theorem ~\ref{thm:improper-upper-bound} resolves this open problem, and the results do not
follow from the framework of \citet{chen2020learning}.

\paragraph{Randomized models.}
Two notions of randomized models have been studied in the strategic setting. In the \emph{randomized algorithms} model~\citep{chen2020learning, dong2018strategic}, the learner reveals the sampled classifier to the agent, who best responds to the realization. In the \emph{fractional} model~\citep{fundamental-bounds}, the agent best responds to the distribution itself. \citet{braverman2020role} study strategic classification with randomized classifiers in a one-dimensional continuous setting; their model aligns with the fractional framework of \citet{fundamental-bounds}, under which \citeauthor{fundamental-bounds} show that the learner may suffer linear regret. Our work operates in the randomized algorithms model, where revealing the realization is strictly more advantageous and sublinear regret is achievable.
 
\paragraph{Agents' manipulation structure.}
Agents often face restrictions when manipulating their features. 
\cite{hardt2016strategic} modeled manipulation restrictions through a cost function, where agents manipulate to another feature vector if and only if the benefit of obtaining a positive prediction outweighs the cost of manipulation. This formulation naturally leads to two common ways of modeling manipulation structures.
The first is a geometric model (e.g., \cite{dong2018strategic, sundaram2021pac, chen2020learning, ghalme21, haghtalab2020maximizing, ahmadi2021strategic, shao2023strategic}), where agents are allowed to manipulate their features within a bounded-radius ball under a known metric. In this work, following \cite{zhang2021incentive, lechner2022learning, lechner2023strategic, fundamental-bounds}, we adopt an alternative model based on manipulation graphs.
These two modeling approaches can be translated into each other. However, ball-based models typically restrict manipulation through bounded radii, whereas manipulation graphs restrict manipulation through the degree of the graph.

\paragraph{Known manipulation structure.}
Throughout this work we assume the manipulation graph $G$ is known to the learner.
This assumption plays an analogous role to the known manipulation radius~$\delta$ in
the continuous (ball-based) model of \citet{chen2020learning}, and is standard across
the graph-based strategic classification literature
(e.g., \citealp{zhang2021incentive,lechner2022learning,lechner2023strategic,%
fundamental-bounds,strat-little-stone-dim,pmlr-v247-cohen24c,shaoshould}).
The assumption is natural in applications where feasible manipulations are discrete and
can be enumerated by domain experts. For instance, the set of credit-related actions
available to a loan applicant. When the manipulation structure is unknown,
\citet{shao2023strategic} establish impossibility results showing that the learner's
mistake bound must scale as $\Omega(\lvert \mc H \rvert)$, which precludes the type of
refined bounds we obtain here.

\paragraph{Distributional strategic classification.}
Beyond the online setting, there is a large body of work on strategic learning in the offline distributional setting.
Works such as \cite{hardt2016strategic, lechner2022learning, hu2019disparate, milli2019social, sundaram2021pac, zhang2021incentive, lechner2023strategic, shao2023strategic} study settings in which agents are drawn i.i.d.\ from an underlying distribution. The learning objective in these works is to output a classifier with low strategic population loss using a minimal amount of training data.

\paragraph{Different analytical perspectives.} While our work also focuses on prediction accuracy, many other works have explored additional perspectives on strategic classification, such as encouraging genuine improvements versus discouraging ``gaming'' \citep[e.g.][]{ahmadi2022classification, bechavod2021gaming, haghtalab2020maximizing, Liu20}, understanding causal implications \citep[e.g.][]{miller2020strategic, bechavod2021gaming, shavit2020causal, performative2020}, transparency in decision rules \citep[e.g.][]{bechavod2022information, shaoshould}, and fairness considerations \citep[e.g.][]{hu2019disparate, Liu20, milli2019social}.

\section{Model}

\subsection{Strategic classification}
We consider binary classification in the adversarial online setting. Let $\mathcal{X}$ denote the feature vector space, $\mathcal{Y} = \{-1,+1\}$ denote the binary label space, and $\mathcal{H} \subseteq \mathcal{Y}^{\mathcal{X}}$ denote the hypothesis class. In the strategic setting, an example $(x,y)$, referred to as an \textit{agent} in this context, consists of a pair of feature vector and label.
An agent will try to receive a positive prediction by modifying their feature vector to some reachable feature vectors. Following prior work (e.g. \cite{fundamental-bounds,pmlr-v247-cohen24c,strat-little-stone-dim}), we model the set of feasible manipulations using a \emph{manipulation graph} $G=(\mathcal{X}, \mathcal{E})$. The vertices represent feature vectors, and an edge $(x,x') \in \mathcal{E}$ exists if and only if an agent with feature vector $x$ can manipulate to $x'$. We let $N_G(x) = \{x' : (x,x') \in \mathcal{E}\}$ denote the neighborhood of $x$, and $N_G[x] = \{x\} \cup N_G(x)$ denote the closed neighborhood.

Given a manipulation graph $G$, an agent with feature vector $x \in \mathcal{X}$ best responds to a classifier $h$ by moving to a reachable feature vector that is labeled positive, if one exists. More specifically, the agent remains at its original feature vector if it cannot obtain a positive prediction through manipulation or if it is already classified as positive. Otherwise, the agent moves to a reachable feature vector that yields a positive prediction.
Formally, we define the notion of best responses as follows.
\begin{definition}[Best responses]\label{def:br}
Given a manipulation graph $G$, a hypothesis $h \in \mathcal{Y}^{\mathcal{X}}$, and a feature vector $x \in \mathcal{X}$, the set of \emph{best responses} of $x$ under $h$ is
\[
\mathrm{BR}_{G,h}(x) := \{x' \in N_G[x] : h(x') = +1\}.
\]
An agent with feature vector $x$ selects a manipulated feature vector $z$ as follows: if $\mathrm{BR}_{G,h}(x) = \emptyset$ or $x \in \mathrm{BR}_{G,h}(x)$, then $z = x$; otherwise, $z \in \mathrm{BR}_{G,h}(x)$ is chosen according to a fixed tie-breaking rule.
\end{definition}

When implementing a hypothesis $h$, the loss incurred at the agent $(x,y)$ is the misclassification error when the agent manipulates w.r.t.\ $h$. We define the strategic loss as follows.
\begin{definition}[Strategic loss]
Given a manipulation graph $G$, a hypothesis $h \in \mathcal{Y}^{\mathcal{X}}$, and an agent $(x,y)$, let $z$ denote the manipulated feature vector chosen by the agent in response to $h$. The \emph{strategic loss} of $h$ at $(x,y)$ is defined as
\[
\ell_G^{\mathrm{str}}(h,(x,y)) := \Ind\{h(z) \neq y\}=
\begin{cases}
\Ind\{h(x) \neq y\}, & \text{if } \mathrm{BR}_{G,h}(x)=\emptyset \text{ or } x \in \mathrm{BR}_{G,h}(x),\\
\Ind\{y=-1\}, & \text{otherwise}.
\end{cases}.
\]
\end{definition}
Note that strategic behavior only harms the learner on negative instances ($y=-1$). If $y=+1$ and the agent manipulates to a positive classification, the prediction is correct (loss 0). However, if $y=-1$ and the agent successfully manipulates (i.e., $\mathrm{BR}_{G,h}(x) \neq \emptyset$), the learner predicts positive while the true label is negative, incurring loss $1$.

\subsection{Online learning with strategic agents}
The learner classifies a sequence of strategic agents ${(x_t,y_t)}_{t=1}^{T}$ selected by an adversary. At each round, the learner selects a distribution over hypotheses and samples a hypothesis from this distribution. As discussed in the introduction, the learner then reveals the sampled hypothesis to the agent, who best responds after observing it. At the end of each round, the learner observes the true label as well as only the manipulated features, rather than the agent’s original features (e.g., the learner observes post-manipulation credit scores in loan approval or the final SAT scores submitted by student applicants).

More formally, at each round $t$, the learner selects a distribution $\mathcal D_t$, the adversary observes $\mathcal D_t$, and then selects an agent $(x_t,y_t)$. The learner then samples a hypothesis $h_t \sim \mathcal{D}_t$ and reveals $h_t$ to the agent. After observing $h_t$, the agent best responds by manipulating her feature vector from $x_t$ to $z_t$ according to Definition~\ref{def:br}. At the end of the round, the learner observes the prediction on the manipulated feature, $\hat y_t = h_t(z_t)$, as well as the true label $y_t$, and incurs a loss whenever $\hat y_t \neq y_t$. This interaction protocol, which repeats for $t=1,\ldots,T$, is summarized in Protocol~\ref{prot:interaction}.

The learner's goal is to minimize the expected \emph{Stackelberg regret} against the best fixed strategy in hindsight:
\[
    \E[\mathfrak{R}_T] \;=\; \E \left[\sum_{t=1}^T \ell_G^{\mathrm{str}}(h_t, (x_t, y_t)) -  \min_{h \in \mc H}\sum_{t=1}^T\ell_G^{\mathrm{str}}(h, (x_t, y_t)) \right],
\]
where the expectation is taken over the learner's internal randomization. When the best hypothesis in $\mathcal H$ has zero loss, i.e., $\min_{h\in \mathcal H} \sum_{t=1}^T\ell_G^{\mathrm{str}}(h, (x_t, y_t))=0$, we call the sequence of agents \emph{realizable} and otherwise call it \emph{agnostic}.

\begin{algorithm2e}[ht]
\SetAlgorithmName{Protocol}{protocol}{List of Protocols}
\caption{Online Strategic Classification}
\label{prot:interaction}
\DontPrintSemicolon
\KwIn{Feature space $\mathcal{X}$, Label space $\mathcal{Y}$, Hypothesis class $\mathcal{H}$, Graph $G$, Horizon $T$.}
\For{$t = 1, \dots, T$}{
    The learner selects a distribution $\mathcal{D}_t \in \Delta(\mc Y^{\mc X})$.\;
    The adversary selects an agent $(x_t, y_t) \in \mathcal{X} \times \mathcal{Y}$.\;
    The learner samples $h_t \sim \mathcal{D}_t$ and reveals $h_t$ to the agent.\;
    The agent manipulates to a feature vector $z_t$ following the manipulation rule in  Definition~\ref{def:br}.\;
    The learner predicts $\hat{y}_t = h_t(z_t)$.\;
    The learner observes the manipulated features $z_t$ and true label $y_t$ and suffers loss $\ell_t = \Ind\{\hat{y}_t \neq y_t\}$.\;
}
\end{algorithm2e}
\section{Overview of Results}

\subsection{Realizable setting}\label{sec:realizable}

In this section, we derive upper and lower expected mistake bounds for the realizable setting in both the cases of finite and infinite hypothesis classes $\mathcal H$. For finite classes, we derive guarantees in terms of the size $n$ of the hypothesis class, while for infinite classes, our bounds are given in terms of the Littlestone dimension $d$ of $\mathcal H$ and the max degree $\Delta$ of the manipulation graph $G$.

\subsubsection{Upper Bounds}
In this section we give two algorithms, the first of which works for finite hypothesis classes and the second of which works for infinite hypothesis classes. Both algorithms work at a high level by maintaining a system of weighted experts. Each round, the algorithm decides to either choose an expert randomly by weight for use as a classifier, or to use an all-positive classifier to obtain full-information which is used to update the expert system.
This decision is made randomly by coin flip, biased to ensure the decrease in weight of the system is correlated to the chance of an expert making a mistake. By understanding how the weight changes, we may then upper bound the number of mistakes. Our algorithmic result for finite classes is the following.
\begin{theorem}\label{thm:finite-realizable-upper-bound}
    For any graph $G$ and hypothesis class $\mathcal H$ of size $n$, and for any realizable sequence, we can achieve expected mistake bound $O(\sqrt{T\log n})$ using Algorithm \ref{alg:mix-realizable}. By combining this algorithm with the naive deterministic algorithm that tries every hypothesis until it fails, we obtain a learner with expected mistake bound $O(\min \{\sqrt{T\log n}, n\})$.
\end{theorem}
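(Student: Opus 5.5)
The approach is to design Algorithm~\ref{alg:mix-realizable} as a halving-style multiplicative-weights scheme over the version space, mixed with occasional all-positive probes. We then bound its expected mistakes by a potential argument.

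\textbf{Algorithm.} Keep a version space $V_t\subseteq\mathcal H$, initialised to $V_1=\mathcal H$. Each round, with probability $p$ we deploy the all-positive classifier $h^+$; otherwise we deploy $h\sim\mathrm{Unif}(V_t)$ and revealed it. Under $h^+$ the agent stays at $x_t$, so we observe $z_t=x_t$ together with $y_t$. Since $G$ is known, this gives full information: we can compute $\ell_G^{\mathrm{str}}(h,(x_t,y_t))$ for every $h\in V_t$ and delete all inconsistent hypotheses. When a sampled $h$ errs, we also delete that $h$; this is valid because we observe the error, and a hypothesis with positive strategic loss cannot be the realizable target. The target $h^\star$ is never deleted, so $|V_t|\ge 1$ always holds.

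\textbf{Potential argument.} Let $q_t$ be the fraction of $V_t$ that incurs strategic loss on $(x_t,y_t)$. The adversary chooses $(x_t,y_t)$ after seeing $\mathcal D_t$, but $q_t$ is determined before the coin is flipped. The expected mistakes in round $t$ are at most $p+(1-p)q_t\le p+q_t$. On a probe round, a $q_t$ fraction of $V_t$ is removed, so $\log|V_t|$ drops by at least $-\log(1-q_t)\ge q_t$ with probability $p$. Summing and taking expectations,
\[
p\,\E\Big[\sum_t q_t\Big]\le \E\Big[\sum_t -\log(1-q_t)\cdot\Ind\{\text{probe}_t\}\Big]\le \log n .
\]
This holds because the log-size of the version space is nonincreasing and bounded below by $0$. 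The subtlety is that $q_t$ is adaptive, but $\Ind\{\text{probe}_t\}$ is an independent coin. So $\E[q_t\Ind\{\text{probe}_t\}\mid\mathcal F_{t-1},(x_t,y_t)]=p\,q_t$, and a telescoping (optional-stopping) argument suffices. Hence the expected number of mistakes is at most $pT+\log n/p$. Choosing $p=\sqrt{\log n/T}$, or $p=1$ when this exceeds $1$, gives $O(\sqrt{T\log n})$. The main technical point is this step: verifying that the expected potential drop dominates $p q_t$ despite the adaptive adversary and despite the randomness in which deployed hypothesis errs. If the loss on probe rounds is one-sided, we can still charge it only to the $p$ term, so there is no problem there.

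\textbf{Combining with the naive learner.} The naive deterministic algorithm fixes an ordering of $\mathcal H$ and deploys the current hypothesis until it errs, then moves to the next. Each hypothesis errs at most once before being discarded, and $h^\star$ never errs, so it makes at most $n$ mistakes. To get $O(\min\{\sqrt{T\log n},n\})$, note that $T$ and $n$ are known in advance. We therefore run the naive learner when $n\le\sqrt{T\log n}$ and Algorithm~\ref{alg:mix-realizable} otherwise. If $T$ were unknown, a doubling trick (or switching once the randomized algorithm's cumulative bound exceeds $n$) would preserve the bound up to constants.
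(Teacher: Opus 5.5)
Your proposal is correct and follows essentially the paper's argument. It uses the same $h^+$/uniform-sampling mixture, bounds exploration mistakes by $pT$, and charges expert mistakes to the potential drop $\log(|V_t|/|V_{t+1}|)\ge q_t$, which occurs with probability $p$. Your extra deletion of sampled hypotheses observed to err is valid and harmless, since it only makes $\log|V_t|$ decrease faster.

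One small repair is needed. Keep the factor $(1-p)$ on the expert term, as the paper does, to get $pT+\frac{1-p}{p}\log n$. Your cruder bound $pT+\log n/p$ equals $T+\log n$ at $p=1$, which is not $O(\sqrt{T\log n})$ when $T\ll \log n$. Alternatively, note that in that regime at most $T\le\sqrt{T\log n}$ mistakes can occur.
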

Our algorithm is based on the following intuition. Say $(x_t,y_t)$ is the agent for round $t$. Let us consider two strategies. In the first strategy the learner commits to sampling a hypothesis uniformly at random from the set of remaining consistent hypothesis $V_t$. Then the probability that a mistake occurs is exactly the proportion $\delta_t$ of hypotheses in $V_t$ which make a mistake.
Next consider the strategy where the learner commits to using an all positive classifier $h^+$ which labels every vertex $+1$. Then since $x_t$ is predicted as $+1$, the agent will never manipulate. By removing each hypothesis that made a mistake, we can ensure that $|V_{t+1}| = (1-\delta_t)|V_t|$. The key observation is that when $\delta_t$ is \emph{small}, sampling a random $h\in V_t$ leads to a small mistake probability. When $\delta_t$ is \emph{large}, always predicting $+1$ leads to a large reduction of the hypothesis set. By playing an appropriate random mix of these two strategies we can ensure (in expectation) that we don't suffer both a high mistake probability and poor learning progress on the same round. The pseudocode is given in Algorithm \ref{alg:mix-realizable}. We conclude by sketching the proof of Theorem \ref{thm:finite-realizable-upper-bound}.

\begin{algorithm2e}[ht]
\caption{Uniform-Mix}
\label{alg:mix-realizable}
\DontPrintSemicolon

Set $p \leftarrow \min\!\left\{1,\sqrt{{\log n}/{T}}\right\}$,
$V_1 \leftarrow \mathcal H$.\;

\For{$t \leftarrow 1$ \KwTo $T$}{
  Sample $C_t \sim \mathrm{Ber}(p)$.\;

  \lIf{$C_t = 1$}{
    $h_t \leftarrow h^{+}$,
  }
  \lElse{
    sample $h_t \sim \mathrm{Unif}(V_t)$.
  }
  Play $h_t$ and receive $(z_t,y_t)$.\;
  \lIf{$h_t = h^+$}{
   $V_{t+1} \gets \{h \in V_t : \ell_G^{\text{str}}(h, (z_t,y_t)) = 0\}.$
  }
}
\end{algorithm2e}

\paragraph{Proof (sketch).} We split mistakes into two types: those incurred by the all positive strategy (exploration mistakes) and those incurred by the uniformly random strategy (expert mistakes). The exploration mistakes are upper bounded by the number of times the all positive strategy is selected, which in expectation is $pT$. We understand the expert mistakes by relating the mistake probability at round $t$ to the change in valid hypotheses $V_t$. In particular we can show the following relationship.
\[
    \Pr[\text{expert mistake on round } t] = (1-p)\delta_t \leq \frac{1-p}{p} \E\left[\log \frac{|V_{t}|}{|V_{t+1}|}\right].
\]
After summing over all rounds $t$ and combining with potential exploration mistakes, we obtain  $\E[M(T)] \leq pT + \frac{1-p}{p} \log |V_1|$. Setting $p = \min\{1,\sqrt{{\log n}/{T}}\}$ and using $V_1 = \mathcal H$, we obtain $\E[M(T)] = O\left(\sqrt{T \log n}\right)$.

Next, we give an algorithm that works for infinite hypothesis classes, which yields the following.
\begin{theorem}\label{thm:infinite-realizable-upper-bound}
    For any graph $G$ and hypothesis class $\mathcal H$, and for any realizable sequence, we can achieve expected mistake bound $O(\sqrt{T d \log \Delta})$ using Algorithm \ref{alg:mix-expert}, where $d = \text{Ldim}(\mathcal{H})$ and $\Delta$ is the max degree of $G$.
\end{theorem}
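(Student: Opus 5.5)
We prove Theorem~\ref{thm:infinite-realizable-upper-bound} by running the same exploration/exploitation mix as in Algorithm~\ref{alg:mix-realizable}. The only change is that the uniform distribution over a finite version space is replaced by a weighted system of experts built from the Littlestone dimension, as in the standard agnostic-to-experts reduction of \cite{ben2009agnostic}.

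\textbf{Construction of the experts.} An expert is indexed by a set of at most $d$ ``mistake rounds'', and each mistake round carries one of at most $\Delta+1$ ``correction labels''. The expert simulates the Standard Optimal Algorithm (SOA) on the full-information examples gathered at exploration rounds. At each of its designated rounds, it overrides SOA's prediction on the relevant vertex. Designated rounds must be exploration rounds, because only there is the full neighborhood information reliable. The correction label for such a round records which vertex in $N_G[x_t]$ is misclassified.

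The key fact is a strategic SOA argument. In a realizable sequence, at an exploration round where the SOA-based classifier would err on $(x_t,y_t)$, the responsible vertex lies in $N_G[x_t]$, a set of size at most $\Delta+1$. Fixing a label for that vertex halves (in Littlestone dimension) the version space. So some expert that guesses the correct vertex among $\Delta+1$ choices, at most $d$ times, is consistent with the target. Restricting the designated rounds to exploration rounds, whose expected number is $pT$, gives a pool size of roughly $N \le \binom{pT}{\le d}(\Delta+1)^d$, so $\log N = O(d\log(pT)+d\log\Delta)$.

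\textbf{Getting $\log\Delta$ rather than $\log T$.} The bound above has a $\log T$ term, and removing it is the main obstacle. The first thing I would try is to not enumerate mistake rounds at all. Instead, run the weighted process adaptively: maintain a distribution over ``SOA branches''. Whenever an exploration round reveals that the current predictions disagree with the truth, split the mass among the at most $\Delta+1$ candidate corrections. The resulting potential then decreases multiplicatively only at informative rounds. The total log-potential budget is $d\log(\Delta+1)$, because along the correct branch each split costs at most a $\log(\Delta+1)$ factor and the branch depth is at most $d$.

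\textbf{Accounting.} Inactive branches are discarded exactly as in Algorithm~\ref{alg:mix-realizable}. Let $\delta_t$ be the weighted fraction of experts that would err at round $t$ under exploitation. Then
\[
\Pr[\text{expert mistake at } t]\le (1-p)\delta_t \le \tfrac{1-p}{p}\,\E[\log(W_t/W_{t+1})].
\]
Here exploration with $h^+$ removes or splits exactly the mass $\delta_t$, since an all-positive classifier induces no manipulation and reveals $x_t$. Summing over $t$, and using that the correct branch retains weight at least $(\Delta+1)^{-d}$, gives $\sum_t \E\log(W_t/W_{t+1}) \le d\log(\Delta+1)$. Hence
\[
\E[M(T)] \le pT + \frac{d\log(\Delta+1)}{p},
\]
and choosing $p=\min\{1,\sqrt{d\log\Delta/T}\}$ yields $O(\sqrt{Td\log\Delta})$.

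The delicate step is verifying that splitting only at exploration rounds preserves the telescoping weight inequality while keeping the correct branch's weight at least $(\Delta+1)^{-d}$. This requires each split to strictly reduce the Littlestone dimension along the true branch, which is the strategic analogue of the SOA halving argument.
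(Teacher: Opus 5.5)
Your second, adaptive scheme has the same architecture as the paper's Algorithm~\ref{alg:mix-expert}. It plays $h^+$ with probability $p$, and on an exploration round it replaces each erring SOA-expert by children fed a corrective example from $N_G[x_t]$. Some branch stays realizable by $h^*$, and that branch has length at most $d$. The gap is in the weight bookkeeping, where two of your claims cannot both hold. You split an erring expert's mass \emph{losslessly} among its at most $\Delta+1$ children, which is exactly what gives your lower bound $(\Delta+1)^{-d}$ on the true branch. Yet you also assert $(1-p)\delta_t\le\frac{1-p}{p}\E[\log(W_t/W_{t+1})]$, which requires $W_{t+1}\le(1-\delta_t)W_t$ on exploration rounds, i.e.\ that the erring mass disappears. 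Under lossless splitting the erring mass is only redistributed among children, none of which is contradicted by the revealed agent. So in general $W_{t+1}=W_t$ and $\log(W_t/W_{t+1})=0$ even when $\delta_t$ is large. This matters: mass that has moved onto wrong branches is never forced to shrink, and a branch whose SOA sequence is no longer realizable has no mistake bound, so nothing in your accounting makes $\delta_t$ decay. If instead you remove erring mass outright, as in Algorithm~\ref{alg:mix-realizable}, the potential drops correctly but the true branch is destroyed the first time it errs.

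The missing idea is to pass on only a constant fraction of an erring expert's weight. When $y_t=-1$, the paper gives the single child $E(x,-1)$ weight $w_t(E)/2$. No split is needed there, because $h^*$ is negative on all of $N_G[x_t]$, so any vertex the expert labels positive is a valid correction. When $y_t=+1$, each child $E(x,+1)$ with $x\in N_G[x_t]$ gets weight $w_t(E)/(2|N_G[x_t]|)$. Then every exploration round gives $W_{t+1}\le W_t(1-\delta_t/2)$, hence $\Pr[\text{expert mistake at } t]\le\frac{2(1-p)}{p}\E[\log(W_t/W_{t+1})]$. Meanwhile the branch realizable by $h^*$ still keeps weight at least $(2(\Delta+1))^{-d}$. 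The log-potential budget becomes $d\log(2(\Delta+1))$ rather than $d\log(\Delta+1)$, so $\E[M(T)]\le pT+\frac{2}{p}\,d\log(2(\Delta+1))$. Your choice of $p$, up to constants, then gives $O(\sqrt{Td\log\Delta})$.
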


Our algorithm will take as input a standard online learning algorithm $\mathcal A$ that has mistake bound $M_{\mathcal A}$. In our case this will be the SOA algorithm by \cite{littlestone1988learning} which has $M_{\mathcal A}(T) = O(d)$. The algorithm will maintain a set of experts $\mathcal E_t$ for each round $t$, constructed by feeding various sequences of agents into the algorithm $\mathcal A$. If $E$ is an expert, denote by $E(x,y)$ the expert obtained by giving $\mathcal A$ the agent sequence of $E$ followed by $(x,y)$. Each expert $E \in \mathcal E_t$ will have an associated weight $w_t(E)$. We denote the total weight at round $t$ by $W_t := \sum_{E \in \mathcal E_t} w_t(E)$.

As in the finite case, for each round, there are two possible procedures chosen by a biased coin flip. The first procedure is an exploration round, where an all-positive classifier is selected. As using such a classifier results in full information, we can identify all incorrect experts, and update our expert system accordingly (this update is identical to that used in Algorithm 2 from \cite{pmlr-v247-cohen24c}). The second procedure is to act according to the expert system, so an expert will be randomly sampled with probability $w_t(E)/W_t$ to be used as a classifier.

There are three fundamental behaviors that determine how we must update the expert system. First, the total weight associated to incorrect experts at round $t$ must decrease by a factor of $1/2$. Second, the sequence associated to an expert must consist of only mistake rounds for the online learner. Third, there must be an expert whose associated sequence is realizable. These behaviors are important so as to link the standard online algorithm mistake bound, the change in weight in a round, and the efficiency of the experts. The pseudocode is given in section \ref{sec:pseudocode-for-infinite} of the appendix.

We enforce this behavior by updating experts in the following way. If $y_t = -1$, then replace every expert $E$ that labels some neighbor of $x_t$ positive, say $x$, by the expert $E(x,-1)$ possessing half the weight of $E$. Else $y_t = +1$, so replace each expert $E$ that labels all of $N_G[x_t]$ negative by $|N_G[x_t]|$ experts $E(x,+1)$, where $x$ ranges over $N_G[x_t]$, each possessing weight $w_t(E)/2|N_G[x_t]|$. The asymmetry in handling the different values of $y_t$ derives from the asymmetry of the strategic learning feedback model; specifically, the condition to ensure both a realizable sequence exists and each agent makes a mistake changes depending on the value of $y_t$.

\paragraph{Proof.} As in the finite case, we split mistakes into exploration mistakes and expert mistakes. The first step is to relate the change in weight at round $t$ to the chance of an expert mistake. Some analysis (Lemma \ref{lem:exp-mistake-prob-inf} in Appendix \ref{sec:appendix-A}) shows that
    for all rounds $t$, we have
    \[
        \Pr[\text{expert mistake at round } t] \leq \frac{2(1-p)}{p} \cdot \E\left[\log\frac{W_t}{W_{t+1}}\right].
    \]
Having established this relation, and combining exploration and expert mistakes, we conclude that
\[
    \E[M(T)] \leq pT + \frac{2(1-p)}{p} \cdot \E\left[\log\frac{W_1}{W_{T+1}}\right].
\]
To grasp the total change in weight from the first to the $T$th round, we take advantage of how the experts are constructed; every expert corresponds to $\mathcal A$ being fed an associated sequence of agents which all force mistakes. By how the weights are allocated, any given expert has weight at least $(1/2\Delta)^\ell$, where $\ell$ is the length of the agent sequence associated to that expert. Moreover, we can show that since the actual agent sequence is realizable, there is always at least one expert $E$ whose associated agent sequence is both realizable and forces mistakes each round when fed into $\mathcal A$ (Lemma \ref{lem:realizable-expert-exists} from Appendix \ref{sec:appendix-A}). For that sequence, we thus have $\ell \leq M_{\mathcal A}$, and so $W_{T+1} \geq w_{T+1}(E) \geq (1/2\Delta)^{M_{\mathcal A}}$. As $W_1 = 1$, we conclude that
\[
    \E[M(T)] \leq pT + \frac{2(1-p)}{p} \cdot M_{\mathcal A} \log 2\Delta.
\]
Taking $p = \min\!\left\{1,\sqrt{M_{\mathcal A}\log 2\Delta / T}\right\}$ and using $M_{\mathcal A} = d$ completes the proof.

By combining this randomized algorithm with the deterministic Algorithm 2 from \cite{pmlr-v247-cohen24c}, we obtain the following corollary.
\begin{corollary}
    For any infinite class $\mathcal H$ with Littlestone dimension $d$ and manipulation graph $G$ with max degree $\Delta$, there is a learner such that for any realizable adversary, the expected number of mistakes is at most $O(\min\{\sqrt{T d \log \Delta}, d \cdot \Delta \log \Delta\})$.
\end{corollary}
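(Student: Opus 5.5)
The approach is to run the two learners side by side and decide, before the interaction begins, which one to deploy. Knowing $T$, $d$ and $\Delta$, we compare $\sqrt{Td\log\Delta}$ with $d\Delta\log\Delta$. When $T < d\Delta^2\log\Delta$, the first quantity is the smaller one, and we run the randomized learner of Theorem~\ref{thm:infinite-realizable-upper-bound} (Algorithm~\ref{alg:mix-expert}) with $p=\min\{1,\sqrt{d\log 2\Delta/T}\}$. Its expected mistake bound is $O(\sqrt{Td\log\Delta})$. Otherwise we run the deterministic Algorithm~2 of \cite{pmlr-v247-cohen24c}. We take as given its guarantee of at most $O(d\,\Delta\log\Delta)$ mistakes on every realizable sequence; deterministic means the bound holds surely, hence also in expectation. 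In either case the expected number of mistakes is $O(\min\{\sqrt{Td\log\Delta},\, d\Delta\log\Delta\})$, up to constants, and the threshold comparison is just squaring the inequality.

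If $T$ is not known in advance, the same effect can be obtained in one of two ways.

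\begin{itemize}
\item Use a doubling trick for the randomized component, with epochs of length $2^k$. This keeps the $\sqrt{Td\log\Delta}$ rate up to constants, because the sequence restricted to each epoch is still realizable (realizability is hereditary) and the bounds sum geometrically.
\item Switch to the deterministic learner once the elapsed time exceeds $d\Delta^2\log\Delta$. This costs at most the mistakes made so far, $O(\sqrt{d\log\Delta\cdot d\Delta^2\log\Delta}) = O(d\Delta\log\Delta)$, plus the deterministic bound on the remaining suffix, which is again realizable.
\end{itemize}

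The main point to check is that both guarantees hold against the same adversary model. The adversary observes $\mathcal D_t$, and a realizable sequence remains realizable on any suffix, so restarting either learner mid-stream is legitimate. Beyond that, the corollary is routine given the two cited bounds.
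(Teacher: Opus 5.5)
Your proposal is correct and matches the paper: the paper obtains the corollary in one line by combining the randomized Expert-Mix learner of Theorem~\ref{thm:infinite-realizable-upper-bound} with the deterministic Algorithm~2 of \cite{pmlr-v247-cohen24c}, and your rule of choosing between them according to whether $T < d\Delta^2\log\Delta$ is that combination made explicit, with $T$ known as the paper's own tuning of $p$ already assumes. One small point concerns your optional unknown-$T$ remark: the two bullets must be used together rather than as alternatives, because doubling alone never drops to $d\Delta\log\Delta$ and the switching argument needs an anytime first phase to get the $\sqrt{Td\log\Delta}$ regime, but this extension is not needed for the statement.
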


\subsubsection{Lower Bounds}\label{sec:realizable-LB}
In this section, we shift our focus to obtaining lower bounds for randomized learners. The central idea will be to construct a manipulation graph and hypothesis class so that the learning interaction is essentially a hidden index game. Our main result is the following.
\begin{theorem}\label{thm:realizable-LB}
    For each $n$, there is a graph $G$ with hypothesis class $\mathcal H$ of size $n$, and a realizable adversary such that for any randomized learner, the expected mistakes are $\Omega(\min\{n,\sqrt{T}\})$.
\end{theorem}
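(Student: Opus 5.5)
The plan is to apply Yao's minimax principle. I will construct a graph $G$, a class $\mathcal H=\{h_1,\dots,h_n\}$, and a \emph{distribution} over realizable sequences, with target index $i^\ast$ drawn uniformly from $[n]$. It then suffices to show that every deterministic learner makes $\Omega(\min\{n,\sqrt T\})$ expected mistakes against this distribution. The intended structure is a hidden index game. Each $h_i$ is positive on a private vertex $u_i$ and negative elsewhere. The manipulation graph contains shared ``hub'' vertices with directed edges into all the $u_i$. Two agent types appear:
\begin{itemize}
\item \emph{positive} agents $(r,+1)$, who can only be served correctly if the learner labels something in $N_G[r]$ positive;
\item \emph{negative} agents, placed so that labeling any $u_j$ with $j\neq i^\ast$ positive lets them manipulate and causes a mistake.
\end{itemize}
Vertices outside the $u_i$ will be negative for every $h_i$. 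An improper learner that labels a hub positive should then be punished by negative agents sitting at (or pointing into) that hub, in the same way it is punished for the wrong $u_j$.

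Step 1 is to verify realizability: $h_{i^\ast}$ incurs zero strategic loss on every agent the adversary may send. Step 2 is a dichotomy for each round. The learner either
\begin{itemize}
\item commits to a classifier that is positive on a set $S$ of candidate $u_j$'s, in which case a negative agent exploiting some $j\in S\setminus\{i^\ast\}$ causes a mistake unless $S$ is (essentially) $\{i^\ast\}$; or
\item plays something safe on negatives, such as all-negative, in which case a positive agent causes a mistake.
\end{itemize}
The adversary flips a fair coin between the two agent types, independently of the learner's choice. Every round therefore costs the learner roughly $\tfrac12\Pr[\text{guess}\neq i^\ast]$ in expectation.

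Step 3 is the information accounting. I will show that the observation $(z_t,y_t)$ reveals at most one eliminated candidate, and only on rounds where the learner pays a constant-probability mistake. Intuitively, a candidate index can be ruled out only by ``testing'' it, and testing costs a mistake with probability $\Omega(1)$. A guessed index that is wrong is eliminated at cost $1$.

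Combining these, after $k$ paid rounds the posterior of $i^\ast$ is uniform on $n-k$ candidates. To make the bound depend on $T$, I will run the construction on $m=\min\{n,\sqrt T\}$ of the hypotheses only, and spread the game over $T$ rounds via a fresh hub per phase. Either the learner pays $\Omega(m)$ to search, or it keeps guessing among $\ge m/2$ candidates for $\Omega(T)$ rounds and pays roughly $T/m\ge m$. Either way the total is $\Omega(m)$.

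The main obstacle is Step 3: ensuring that the \emph{observed manipulated feature} $z_t$ does not leak $i^\ast$ for free. The learner sees $z_t$ even when the agent does not move, so an agent originating at $u_{i^\ast}$ would immediately reveal the answer. My first attempt will therefore place \emph{all} agents' original features at hub vertices shared by every hypothesis. I will then check case by case that, for any classifier the learner plays, the agent's resulting $z_t$ is either independent of $i^\ast$, or equals some $u_j$ only in a round where the learner already incurs a mistake (with a tie-breaking rule fixed in the adversary's favor). If a single hub cannot simultaneously host positive and negative agents realizably, I would use a two-layer gadget: positive hubs point to all $u_i$, and negative agents sit at per-index vertices $v_j$ with edges $v_j\to u_j$, shown only for $j\neq i^\ast$ and chosen uniformly.
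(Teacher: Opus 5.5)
There is a genuine gap, and it sits in the Step~3 you flagged. With an adversary that flips a fair coin independently of the learner, your Steps~2 and~3 cannot both hold.

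Any classifier under which the agent does not move (all-negative, or $h^+$) reveals $x_t$ exactly. Under your coin, each such probe costs at most $1/2$ in expectation. Now take a negative agent $(x,-1)$ that is realizable by $h_{i^\ast}$ yet punishes a wrong guess $h_j$. It must have $u_{i^\ast}\notin N_G[x]$ and $u_j\in N_G[x]$, so observing $x$ eliminates \emph{every} index that $x$ would have punished. Punishing wrong guesses with constant probability and revealing at most one candidate when observed are therefore in direct tension, and no placement resolves it.

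Your two placements hit the two horns:
\begin{itemize}
\item A negative agent at a hub pointing into all $u_i$ is not realizable: under $h_{i^\ast}$ it moves to $u_{i^\ast}$ and is labeled $+1$.
\item In the $v_j\to u_j$ gadget, a wrong guess $h_k$ is punished only if $j=k$, i.e.\ with probability $1/(n-1)$. Every other negative round shows $v_j$ and eliminates $j$ for free. A learner that just plays any surviving $h_k$ therefore makes $O(\log n)$ mistakes, by a coupon-collector count.
\end{itemize}
With a complement-style placement instead (home vertex adjacent to all $u_j$, $j\neq i^\ast$), an expected two all-negative probes reveal $i^\ast$ at $O(1)$ total cost.

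Taken at face value, your per-round dichotomy would even give $\Omega(\min\{n,T\})$, which contradicts the $O(\sqrt{T\log n})$ bound of Theorem~\ref{thm:finite-realizable-upper-bound}. Restricting to $m=\min\{n,\sqrt T\}$ hypotheses with a fresh hub per phase does not address this. The $T/m$ accounting also does not follow from anything you established.

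The paper avoids this with two ingredients your plan lacks.

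First, a complement gadget. Hypotheses are singletons on a layer $P$, and the negative agent sits at $u_{i^\ast}$, which is adjacent to every $p_j$ except $p_{i^\ast}$. This punishes \emph{every} wrong guess at once, which is necessary against a randomized learner whose sampled guess the adversary never sees. Under a good classifier with smallest positive index $k\neq i^\ast$, the agent moves to $p_k$. That observation depends on $i^\ast$ only through $i^\ast\neq k$. The home vertex becomes visible only after a correct guess, or when a bad classifier (in $A$, $B$, or $C$) is sampled.

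Second, and this is where $\sqrt T$ comes from, the adversary adapts to the announced $\mathcal D_t$:
\begin{itemize}
\item If some bad set has mass at least $\gamma=\Theta(1/\sqrt T)$, it plays the $i^\ast$-independent witness $(L,-1)$, $(z,-1)$, or $(R,+1)$, charging at least $\gamma$.
\item Otherwise it plays $(u_{i^\ast},-1)$, where a bad classifier is drawn with probability below $3\gamma$.
\end{itemize}
Either at least $T/2$ rounds are punished, giving $\gamma T/2$. Or, over the first $\tau\le\min\{n/2,1/(6\gamma)\}$ informative rounds, the learner samples a bad classifier with total probability at most $1/2$, and posterior uniformity (Lemmas~\ref{lem:posterior-uniform} and~\ref{lem:prob-Etau}) gives $\Pr[\mathcal E_\tau]\ge 1/4$.

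A coin fixed in advance cannot price peeking in proportion to how often the learner peeks. That is why your Yao-style reduction against an oblivious coin-flipping distribution does not produce this trade-off.
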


In the above construction, we will have $\text{Ldim}(\mc H) = 1$ and $n = \Delta$. We can extend the construction to the case of $\text{Ldim}(\mc H) = d$ by making $d$ independent copies. Thus, we derive the following corollary whose proof is deferred to Appendix \ref{sec:appendix-A-proof-of-infinite-lb}.

\begin{corollary}\label{cor:realizable-lb-inf}
    For each $n$, there is a graph $G$ with max degree $\Delta$ and hypothesis class $\mathcal H$ of Littlestone dimension $d$, and a realizable adversary such that for any randomized learner, the expected mistakes are $\Omega(\min\{\sqrt{Td},d\Delta\})$.
\end{corollary}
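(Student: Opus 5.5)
The plan is to take $d$ vertex-disjoint copies of the construction from Theorem~\ref{thm:realizable-LB} and split the horizon evenly among them.

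\textbf{Construction.} Fix $n=\Delta$ and let $(G_1,\mathcal H_1),\dots,(G_d,\mathcal H_d)$ be disjoint copies of the graph and class from Theorem~\ref{thm:realizable-LB}. Let $G$ be their disjoint union, so its max degree is still $\Theta(\Delta)$; if the construction uses degree $\Delta+O(1)$, rescale $n$ accordingly. Let
\[
\mathcal H=\{h_1\oplus\cdots\oplus h_d : h_i\in\mathcal H_i\},
\]
where $h_1\oplus\cdots\oplus h_d$ agrees with $h_i$ on $V(G_i)$.

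\textbf{Littlestone dimension.} I need $\ldim(\mathcal H)=\Theta(d)$. Since $\ldim(\mathcal H_i)=1$, Littlestone dimension is additive over disjoint-domain products, giving $\ldim(\mathcal H)=d$. For the upper bound: a shattered tree restricted to the nodes in copy $i$ has at most one level of splitting along any path. For the lower bound, concatenate depth-one trees. I only need $\Theta(d)$, so a crude argument is enough.

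\textbf{Adversary.} Partition the horizon into $d$ blocks of length $T/d$, and in block $i$ run the realizable adversary of Theorem~\ref{thm:realizable-LB} on copy $i$. Because the copies are disjoint, an agent in $G_i$ can only move within $G_i$. Its best response, and hence the strategic loss and the learner's feedback, depends only on the restriction $h_t|_{V(G_i)}$.

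\textbf{Reduction.} Given any randomized learner $\mathcal L$ on the product instance, define for each $i$ a learner $\mathcal L_i$ on copy $i$ alone. This learner internally simulates $\mathcal L$ on blocks $1,\dots,i-1$ using the adversaries for those copies with independent randomness, then plays the restrictions to $V(G_i)$ during block $i$. The hidden targets in different copies are independent (the adversary chooses a target in each copy), and feedback in block $i$ depends only on copy $i$. So $\mathcal L_i$ is a legitimate learner for the single-copy game of length $T/d$. By Theorem~\ref{thm:realizable-LB}, its expected mistakes are $\Omega(\min\{\Delta,\sqrt{T/d}\})$.

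One subtlety needs care. If the adversary in Theorem~\ref{thm:realizable-LB} is adaptive or randomized, the bound must hold for every learner, including ones with extra independent internal randomness. It does, since the theorem quantifies over all randomized learners. Realizability of the whole sequence holds because we concatenate realizable targets $h_i^*$ into $h_1^*\oplus\cdots\oplus h_d^*\in\mathcal H$.

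\textbf{Summing.} Adding over the $d$ blocks gives
\[
\Omega\bigl(d\min\{\Delta,\sqrt{T/d}\}\bigr)=\Omega\bigl(\min\{d\Delta,\sqrt{Td}\}\bigr).
\]

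The main obstacle is making the reduction rigorous, namely that the interaction inside block $i$ is exactly an instance of the single-copy game regardless of the history. I would handle this by conditioning on the transcript of blocks $1,\dots,i-1$ and invoking Theorem~\ref{thm:realizable-LB} conditionally. This is valid provided that theorem's adversary is oblivious or its lower bound is uniform over learners, which is how it is stated.
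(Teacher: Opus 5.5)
Your proposal is correct and follows the paper's proof. Like the paper, you take $d$ disjoint copies of the construction from Theorem~\ref{thm:realizable-LB} with the product class, run the single-copy adversary on each copy for $T/d$ rounds, and sum to get $\Omega(d\min\{\Delta,\sqrt{T/d}\})=\Omega(\min\{d\Delta,\sqrt{Td}\})$; your simulation-based reduction is a more careful version of the paper's one-line appeal to disjointness.

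One small fix concerns your justification of $\ldim(\mathcal H)\le d$. The per-path splitting claim is false as stated, since copy-$i$ nodes can recur and split along $-1$ branches. The bound does follow from noting that the all-$(+1)$ path of a shattered tree meets each copy at most once. Alternatively, run SOA independently on each copy, which makes at most $d$ mistakes in total.
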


\paragraph{Construction.}
Build an undirected manipulation graph with two layers
$U=\{u_1,\dots,u_n\}$ and $P=\{p_1,\dots,p_n\}$, together with three additional
vertices $L,R,z$. The edges are arranged so that each $u_i$ connects to all $p_j$
\emph{except} $p_i$; additionally, $L$ connects to all of $U$, $R$ connects to all of $P$,
and $z$ connects only to $R$. The hypothesis class $\mathcal H$ is the family of `singletons over $P$':
for each $i\in[n]$, hypothesis $h^i$ labels only $p_i$ as $+1$ and labels every other
vertex as $-1$. 
Best responses break ties toward the $p_i$ with the smallest index.
This construction is depicted in Figure \ref{fig:graph-cons}.
\begin{figure}[ht]
    \centering
\begin{tikzpicture}[
    every node/.style={circle, draw, minimum size=9mm, inner sep=0pt, font=\small},
    >=stealth,
]
\path[use as bounding box] (-0.5,-6.9) rectangle (8.5,0.6);
\node (u1)     at (0, 0)      {$u_1$};
\node (udots1) at (0, -1.4)   {$\vdots$};
\node (uk)     at (0, -2.8)   {$u_k$};
\node (udots2) at (0, -4.2)   {$\vdots$};
\node (um)     at (0, -5.6)   {$u_n$};

\node (p1)     at (4, 0)      {$p_1$};
\node (pdots1) at (4, -1.4)   {$\vdots$};
\node (pk)     at (4, -2.8)   {$p_k$};
\node (pdots2) at (4, -4.2)   {$\vdots$};
\node (pm)     at (4, -5.6)   {$p_n$};

\node (tU)  at (-2.0, -2.8) {$L$};
\node (tP)  at (7.0, -2.8)  {$R$};
\node (z)  at (9.0, -2.8)  {$z$};

\draw[-] (u1) -- (pk);
\draw[-] (u1) -- (pm);

\draw[-] (uk) -- (p1);
\draw[-] (uk) -- (pm);

\draw[-] (um) -- (p1);
\draw[-] (um) -- (pk);
\draw[-] (tP) -- (p1);
\draw[-] (tP) -- (pk);
\draw[-] (tP) -- (pm);

\draw[-] (tU) -- (u1);
\draw[-] (tU) -- (uk);
\draw[-] (tU) -- (um);

\draw[-] (z) -- (tP);

\draw[ dashed] (u1) -- (p1);
\draw[ dashed] (uk) -- (pk);
\draw[ dashed] (um) -- (pm);

\node[draw=none, font=\small] at (-0.9, 0.2) {$U$};
\node[draw=none, font=\small] at (4.9, 0.2) {$P$};

\node[draw=none, font=\small] at ($(u1)+(-0.45,0.45)$) {$-$};
\node[draw=none, font=\small] at ($(uk)+(-0.45,0.45)$) {$-$};
\node[draw=none, font=\small] at ($(um)+(-0.45,0.45)$) {$-$};

\node[draw=none, font=\small] at ($(p1)+(0.45,0.45)$) {$-$};
\node[draw=none, font=\small] at ($(pm)+(0.45,0.45)$) {$-$};

\node[draw=none, font=\small] at ($(pk)+(0.45,0.45)$) {$+$};

\node[draw=none, font=\small] at ($(tU)+(-0.45,0.45)$) {$-$};
\node[draw=none, font=\small] at ($(tP)+(0.45,0.45)$) {$-$};
\node[draw=none, font=\small] at ($(z)+(0.45,0.45)$) {$-$};

\node[draw=none, font=\small, align=center] at (3.5, -6.5)
  {Hypothesis $h_k$ labels $p_k$ as $+1$ and all other nodes as $-1$.};

\end{tikzpicture}
    \caption{Manipulation graph for lower bound.}
    \label{fig:graph-cons}
\end{figure}

\paragraph{Adversary for proper learners.} We first present an adversary for proper learners. After, we will adapt this strategy to work for all learners. At the beginning of the interaction, the adversary samples a hidden index $i^* \sim \mathrm{Unif}([n])$. Each round the adversary plays the agent $(u_{i^*},-1)$, which is realizable by the hypothesis $h_{i^*}$. Note also that if a learner plays $h_{i}$ for any $i \neq i^*$, then the learner incurs a mistake. This strategy reduces the interaction to the learner attempting to determine the hidden index. Each round the learner either guesses correctly, in which case the learner now knows $i^*$ so no more mistakes will be made after that point, or the learner guesses incorrectly and rules out a single index from consideration. As the hidden index is chosen uniformly at random, one can deduce that the probability the learner does not find $i^*$ within $t$ rounds is at least $(n-t)/n$. Taking $t = \min\{T,n/2\}$ shows that any learner incurs at least $\min\{T/2,n/4\}$ mistakes in expectation.

\paragraph{Adversary for all learners.} As in the proper case, at the start of the interaction, the adversary samples a hidden index $i^* \sim \mathrm{Unif}([n])$, which determines the
realizable target $h_{i^*}$. At each round $t$, the adversary adaptively selects agents based on the learner's
announced distribution $D_t$ over classifiers.
We define three `bad' sets of classifiers:
\begin{align*}
  A &:= \{f: f(L)=+1 \ \text{or}\ \exists i\in[n]\ \text{s.t. } f(u_i)=+1\}, \\
  B &:= \{f: f(R)=+1 \ \text{or}\ f(z)=+1\},\\
  C &:= \{f: f(P \cup \{R,z\})=-1\}.
\end{align*}
These sets are chosen so that if a classifier is a member, it makes a mistake on a particular choice of agent whereas a classifier not in any of these sets (including all the hypotheses in $\mathcal H$) do not. The agent witnessing this is $(L,-1)$ for $A$, $(z,-1)$ for $B$, and $(R,+1)$ for $C$.

We call these sets `bad' since the classifiers in the complement, the `good' set, behave almost the same as hypotheses from $\mathcal H$. In the case that a learner plays a 'good' hypothesis, the adversary should play $(u_{i^*},-1)$ according to the proper learner strategy, and in the case that a `bad' hypothesis is played, the adversary should punish the learner by choosing an agent according to the paragraph above. This informs the following strategy.
Let $\gamma = \Theta(1/\sqrt{T})$. On round $t$, the adversary plays according to the following cases.
\begin{inparaenum}[(1)]
  \item If $\mathcal{D}_t(A) \ge \gamma$, play $(L,-1)$.
  \item Else if $\mathcal{D}_t(B) \ge \gamma$, play $(z,-1)$.
  \item Else if $\mathcal{D}_t(C) \ge \gamma$, play $(R,+1)$.
  \item Otherwise, play $(u_{i^*},-1)$.
\end{inparaenum}
Any agent sequence resulting from this strategy is realizable by $h_{i^*}$.
The given adversary on the constructed graph witnesses Theorem \ref{thm:realizable-LB}.

\paragraph{Proof (sketch).}
Cases 1--3 are designed so that whenever the adversary triggers one of them, the learner
incurs mistake probability at least $\gamma$ on that round (because $\mathcal{D}_t$ places at least
$\gamma$ mass on classifiers that are forced to err in that case). Hence, if there are fewer
than $T/2$ case 4 rounds, then there are at least $T/2$ case 1--3 rounds and
$\E[M(T)] \ge \frac{\gamma T}{2} = \Omega\left(\min\{n,\sqrt{T}\}\right)$.

Otherwise, analyze the first $\tau$ case 4 rounds. If in each of these rounds, the hypothesis $h$ sampled is good, so $h$ is in $(A \cup B \cup C)^c$, then similar reasoning as in the proper case holds to conclude that the probability of incurring a mistake in all of those rounds is at least $(n-\tau)/n$. Now, in any case 4 round, the chance that the hypothesis sampled is good is at least $1-3\gamma$, so we determine that overall, the probability of incurring a mistake in each of the first $\tau$ case 4 rounds is at least $(1-3\gamma)^\tau \cdot \frac{n-\tau}{n} \geq \frac{1-3\gamma\tau}{2}$.
The inequality follows from $\tau \le n/2$ and Bernoulli's inequality. By taking $\tau = \Theta(\min\{n,\sqrt{T}\})$,  we obtain $
    \E[M(T)] = \Omega\left(\min\{n,\sqrt{T}\}\right).$

\subsection{Agnostic setting}\label{sec:agnostic} We present our main results for the agnostic setting with finite hypothesis classes. First, we give an improper online algorithm achieving expected regret $O(\sqrt{T \log |\mc H|})$. Since non-strategic instances are a special case of strategic learning, the $\Omega(\sqrt{T \log |\mc H|})$ lower bound from \cite{ben2009agnostic} carries over, and our upper bound is tight. Second, we show that improper learning is necessary to attain the optimal rate, by providing a lower bound of $\Omega(\min\{\sqrt{T \log |\mc H|} + |\mc H|, \sqrt{T|\mc H|\log|\mc H|}\})$ for any proper learner. 

\subsubsection{Upper Bound}
In this section, we prove the following theorem.

\begin{theorem}\label{thm:improper-upper-bound}
    Let $\mc H$ be a finite hypothesis class of size $n$ and $G$ a manipulation graph. Then there exists an improper algorithm which guarantees an expected regret bound of $O(\sqrt{T \log n})$ against any adaptive adversary.
\end{theorem}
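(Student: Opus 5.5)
The plan is to run exponential weights over $\mc H$ with importance-weighted loss estimates. The learner's own randomization does the exploring, and the improperness enters through the all-negative classifier $h^-$. Two observations drive the plan. First, if the learner samples $h\sim w_t$ and plays $h$ itself, its strategic loss on $(x_t,y_t)$ is exactly $\ell_G^{\mathrm{str}}(h,(x_t,y_t))$. Hence the expected loss of this play equals the Hedge loss $\sum_h w_t(h)\ell_t(h)$, with no exploration cost. Second, whenever the played classifier labels all of $N_G[x_t]$ negative, the agent does not move and $z_t=x_t$. Since $y_t$ is always observed, the learner then knows $\ell_t(h)$ for every $h\in\mc H$.

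Write $p_t(x)=w_t(\{h:\exists v\in N_G[x],\,h(v)=+1\})$. Playing a sampled expert therefore reveals $x_t$ with probability $1-p_t(x_t)$. On rounds with $y_t=+1$, the erring experts are exactly those with no positive vertex in $N_G[x_t]$. Their mass is $1-p_t(x_t)$, which equals the observation probability. So the estimator $\hat\ell_t(h)=\ell_t(h)\Ind\{z_t=x_t \text{ revealed}\}/(1-p_t(x_t))$ has second-moment term $\sum_h w_t(h)\hat\ell_t(h)^2$ with expectation at most $1$. Crucially, $p_t(x_t)$ is computable after the fact once $x_t$ is revealed.

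On rounds with $y_t=-1$ the erring experts are the complementary set, of mass $p_t(x_t)$, and the ratio $p_t/(1-p_t)$ is unbounded. I would handle these rounds by shifting losses. Because $y_t$ is observed, subtracting $1$ from every expert's loss on $y_t=-1$ rounds changes no one's regret. The shifted loss is then $-\Ind\{h \text{ has no positive in } N_G[x_t]\}$. Its support is exactly the set of experts whose play reveals $x_t$, so the matching importance-weighted estimate again has variance term at most $1$. With these estimates I would apply the standard second-order exponential-weights bound
\[
\E[\mathfrak R_T]\le \frac{\log n}{\eta}+\eta\sum_t \E\Big[\sum_h w_t(h)\hat\ell_t(h)^2\Big]\le \frac{\log n}{\eta}+\eta T,
\]
and take $\eta=\sqrt{\log n/T}$. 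The argument is per round and uses only that $w_t$ is fixed before the adversary picks $(x_t,y_t)$, so it holds against adaptive adversaries.

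The main obstacle is the negative, gain-type estimates on $y_t=-1$ rounds. Their magnitude is $1/(1-p_t)$, which can be huge, and this breaks the condition $\eta\hat\ell\ge -1$ that the exponential-weights inequality needs. I would first try mixing $h^-$ into the learner's distribution with probability $\gamma_t$ chosen as a function of $w_t$ alone. On $y_t=-1$ rounds $h^-$ is always correct, so it costs nothing there, and it guarantees $z_t=x_t$ with probability at least $\gamma_t$. Its only cost is $\gamma_t$ on $y_t=+1$ rounds. I would try to charge that cost to the Hedge loss on those rounds, for instance by choosing $\gamma_t\le \eta$ and using $\eta/(1-p_t+\gamma_t)\le 1$. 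Failing that, I would further modify the sampled expert improperly, for example by zeroing it outside a neighborhood, so that the probability of revealing $x_t$ dominates $p_t$ on negative rounds. Every other step is routine bookkeeping of the Hedge potential.
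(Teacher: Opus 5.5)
\textbf{Where your plan matches the paper.} Most of your architecture is the paper's. This includes the shifted loss: your $-\Ind\{h\in R_t\}$ on negative rounds is $d_t(i)=y_t\Ind\{h^i\in R_t\}$, where $R_t$ is the set of experts labeling all of $N_G[x_t]$ negative. It also includes the observation that a sampled expert certifies $z_t=x_t$ exactly when it lies in $R_t$. Finally, you mix in an always-revealing improper classifier at rate $\Theta(\eta)$, so the reveal probability $a_t=\gamma+(1-\gamma)q_t$ is at least of order $\eta$. Here $q_t$ is the weight of $R_t$, your $1-p_t(x_t)$. Using $h^-$ instead of the paper's $h^+$ is fine. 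There is nothing to ``charge'': playing $h^-$ with probability $\gamma$ changes the expected loss by $\gamma y_t(1-q_t)\le\gamma$. It simply adds $\gamma T=O(\eta T)$, a term your final display omits.

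\textbf{The gap: adaptive adversaries.} The genuine gap is your claim that the bound holds against adaptive adversaries because the argument is per round. The paper's regret has the minimum inside the expectation. Your pathwise Hedge inequality holds for each fixed comparator $j$, and unbiasedness gives $\E[\widehat L_T(j)]=\E[L_T(j)]$ only for fixed $j$. So your argument bounds only the pseudo-regret $\E[\sum_t\ell_t(h_t)]-\min_j\E[L_T(j)]$. Against an adaptive adversary the best expert $i^*$ is random and correlated with the estimation noise. You therefore need $\E[\max_j(\widehat L_T(j)-L_T(j))]=O(\log n/\eta)$. Plain importance-weighted estimates have conditional variance up to $1/a_t\approx 1/\eta$. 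Generic martingale concentration then gives only about $\sqrt{T\log n/\eta}$, which is of order $T^{3/4}$ at your $\eta$.

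\textbf{The missing idea.} The paper uses an implicit-exploration denominator: $\widehat d_t(i)=A_t\,y_t\Ind\{h^i\in R_t\}/(a_t+\eta y_t)$ with $a_t\ge 2\eta$. For $y_t=+1$ use $e^r\le 1/(1-r)$ with $r=\eta/(a_t+\eta)$. For $y_t=-1$ use $\exp(-r/(1-r))\le 1-r$ with $r=\eta/a_t$. Together these give $\E[\exp(\eta(\widehat d_t(i)-d_t(i)))\mid\mathcal G_t]\le 1$, conditioning on the past and on $(x_t,y_t)$. Hence $\exp(\eta\sum_{s\le t}(\widehat d_s(i)-d_s(i)))$ is a supermartingale for each $i$, and log-sum-exp gives the uniform deviation bound $\log n/\eta$. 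The price is a per-round bias of at most $2\eta$ and a second moment of at most $4$, both within budget. This carries over to your $h^-$ variant if you mix at rate $2\eta$ rather than $\le\eta$. The argument only uses $A_t\mid\mathcal G_t\sim\mathrm{Bernoulli}(a_t)$, $q_t\le a_t$, and $a_t\ge 2\eta$.
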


\paragraph{Feedback structure.}
As observed by \cite{fundamental-bounds}, the feedback in strategic classification lies between full-information and bandit feedback. At the end of each round the learner observes the manipulated feature $z_t$ and the true label $y_t$, but not the original feature $x_t$. However, in two situations the learner can certify that no manipulation occurred, that is $z_t = x_t$.
\begin{enumerate}[(i)]
    \item When $h_t(z_t) = -1$, since an agent manipulates only to secure a positive prediction, a negative prediction at $z_t$ implies no manipulation took place; or
    \item When $h_t(z_t) = +1$ and $h_t(x') = +1$ for every $x' \in N_G[z_t]$, since every vertex in the closed neighborhood of $z_t$ is already classified positively, any potential original feature $x_t \in N_G[z_t]$ was already positive, so the agent had no incentive to move.
\end{enumerate}
In either case, knowing $z_t = x_t$ reveals the closed neighborhood $N_G[x_t]$, from which the strategic loss of every hypothesis in $\mc H$ can be computed. We refer to any round on which one of these conditions holds as a \emph{reveal event}.

\paragraph{Prior work.}
The algorithm of \cite{fundamental-bounds} exploits this structure by employing an exponential weights algorithm over $\mc H$ with updates prompted by periodic exploration via the all-positive classifier $h^+$. The reveal event in this case is certified by situation (ii) above. On each round, with probability $p$ the learner plays $h^+$ to obtain full information and update the weights; otherwise the learner samples from the current distribution. The learner therefore pays an explicit exploration cost of $pT$ from playing $h^+$, and a cost of order $\sqrt{T\log |\mc H|}/p$ from running Hedge on importance-weighted estimators with loss that scales as $1/p$. Balancing these terms yields a regret of $O(T^{3/4}\log^{1/4}(T|\mc H|))$.

\paragraph{Our approach.}
We improve upon the results of \cite{fundamental-bounds} by using reveal events more efficiently. Like their approach, our algorithm mixes in the all-positive classifier $h^+$, but $h^+$ is no longer the only source of full-information updates. Instead, the learner maintains exponential weights over $\mc H$, plays $h^+$ with probability $\rho$, and otherwise samples $h_t\sim p_t$ from the current weights. Whenever the resulting round yields a reveal event, either because $h_t=h^+$ or because the sampled proper hypothesis satisfies $h_t(z_t)=-1$, the learner computes a loss estimator $\hat d_t$ and updates the weights accordingly. Formally, the learner maintains weights $w_t(i)$ over $\mc H$, which induces a distribution $p_t(i) := w_t(i)/W_t$. Upon a reveal event, the weights are updated by $w_{t+1}(i) \propto w_t(i)\exp(-\eta \widehat d_t(i))$, where $\widehat d_t$ estimates an appropriate loss vector $d_t$. We will construct $\widehat d_t$ so that it is bounded, has constant weighted second moment $\E[\langle p_t,\widehat d_t^{\,2}\rangle]$, and has controlled bias and uniform deviation from the true cumulative losses. These properties ensure that the Hedge algorithm performs well.

\paragraph{The estimator.} Write $\mc H$ as $\{h^1,h^2,\dots,h^n\}$ where $h^i$ ranges over the $n$ hypothesis of $\mc H$. Let
\[
    R_t := \{h^i \in \mc H : h^i(x') = -1 \text{ for all } x' \in N_G[x_t]\},
\]
the set of hypotheses labeling the entire closed neighborhood of $x_t$ as negative. The strategic loss can be expressed in terms of $R_t$:
\[
    \ell_G^{\mathrm{str}}(h^i, (x_t, y_t)) = \begin{cases}
        \Ind\{h^i \in R_t\}, & y_t = +1, \\
        1 - \Ind\{h^i \in R_t\}, & y_t = -1.
    \end{cases}
\]
Rather than estimating this loss vector directly, we estimate an equivalent shifted loss. Specifically, define
\[
    d_t(i)
    :=
    \ell_G^{\mathrm{str}}(h^i,(x_t,y_t))-\Ind\{y_t=-1\}
    =
    y_t\Ind\{h^i\in R_t\}.
\]
The shift is the same for every hypothesis, and therefore cancels in regret comparisons. We adopt an estimator inspired by the implicit exploration technique of \cite{neu2015}. Let $\eta \in (0,1/2)$ be a tunable learning parameter. On each round, with probability $\rho := 2\eta$, the learner plays $h^+$, and otherwise samples $h_t \sim p_t$. Let $A_t := \Ind\{h_t = h^+ \text{ or } h_t(z_t) = -1\}$ indicate whether a reveal event occurred, and let $a_t$ denote its probability of occurring, which is $\rho + (1-\rho)p_t(R_t)$.

We now define the estimator. Let
\begin{equation}
    \widehat d_t(i) := A_t \cdot \frac{y_t \Ind\{h^i \in R_t\}}{a_t + \eta y_t}.
    \label{eq:improper-estimator}
\end{equation}
We verify that this estimator satisfies the three desired properties. Let $\mc F_{t-1}$ denote the history before round $t$, and define $\mc G_t := \sigma(\mc F_{t-1}, x_t, y_t)$. First, since $a_t \ge \rho = 2\eta$, the denominator satisfies $a_t + \eta y_t \ge \eta$ for both labels, so $|\widehat d_t(i)| \le 1/\eta$. Second, since $\widehat d_t(i)$ is nonzero only for $h^i \in R_t$ and only when $A_t$ holds, a direct calculation gives
\[
    \E[\langle p_t,\widehat d_t^{\,2}\rangle \mid \mc G_t]
    = \frac{a_t \cdot p_t(R_t)}{(a_t + \eta y_t)^2} \le 4,
\]
where the bound follows from $p_t(R_t) \le a_t$ and $a_t - \eta \ge a_t/2$. Similarly, the bias satisfies
\[
    \E[\langle p_t,d_t-\widehat d_t\rangle \mid \mc G_t]
    = p_t(R_t) \cdot \frac{\eta y_t^2}{a_t + \eta y_t} \le 2\eta.
\]
Third, the $\eta y_t$ correction in the denominator is chosen so that $\E[\exp(\eta(\widehat d_t(i) - d_t(i))) \mid \mc G_t] \le 1$ for each $i \in [n]$. This makes $\exp(\eta \sum_{s \le t}(\widehat d_s(i) - d_s(i)))$ a supermartingale, from which a standard log-sum-exp argument yields
\[
    \E\!\left[\max_i \sum_{t=1}^T(\widehat d_t(i)-d_t(i))\right]
    \le \frac{\log n}{\eta}.
\]
Thus, the estimator has exactly the properties required.

\begin{algorithm2e}[ht]
\caption{Improper-Hedge}\label{alg:improper}
\DontPrintSemicolon
Initialize $w_1(i) \gets 1$ for all $i \in [n]$\;
\For{$t \gets 1$ \KwTo $T$}{
  Set $p_t(i) \gets w_t(i) / W_t$ where $W_t = \sum_{j=1}^n w_t(j)$\;
  Sample $B_t \sim \mathrm{Bernoulli}(\rho)$\;
  \lIf{$B_t = 1$}{$h_t \gets h^+$}
  \lElse{sample $h_t \sim p_t$}
  Play $h_t$ and receive $(z_t, y_t)$\;
  \If{$h_t = h^+$ or $h_t(z_t) = -1$}{
    Compute $R_t = \{h^i : \forall x' \in N_G[z_t],\ h^i(x') = -1\}$ and $a_t \gets \rho + (1-\rho)p_t(R_t)$\;
    $\widehat d_t(i) \gets \Ind\{h^i \in R_t\} \cdot \frac{y_t}{a_t + \eta y_t}$ for $i \in [n]$\;
  }
  \lElse{$\widehat d_t \gets 0 \in \mathbb{R}^n$}
  Update $w_{t+1}(i) \gets w_t(i) \exp(-\eta \widehat d_t(i))$ for all $i \in [n]$\;
}
\end{algorithm2e}

\paragraph{Remark on proper learning.} Without the improper action $h^+$, one can still construct estimators for the shifted loss using only condition (i) for reveals. In that case, the reveal probability reduces to $p_t(R_t)$, which may be arbitrarily small, making the importance-weighted estimator unbounded. To accommodate this, one can use the FTRL with log-barrier regularization framework of \citep{pmlr-v247-erez24a}. This approach yields regret $O(\sqrt{T\log n} + n\log(nT))$, which by Theorem~\ref{thm:agnostic-lower-bound} is optimal for proper learners up to logarithmic factors.

We conclude this section by showing how the estimator properties imply Theorem~\ref{thm:improper-upper-bound}; the full verification of these properties is
deferred to Appendix~\ref{sec:appendix-B-upper-bound-proof}. The full procedure is given in Algorithm~\ref{alg:improper}.

\paragraph{Proof (sketch).}
It suffices to analyze regret with respect to the shifted losses $d_t$, since the shift
$\Ind\{y_t=-1\}$ is the same for every hypothesis and therefore cancels in regret
comparisons. Let
\[
    \bar L_T := \sum_{t=1}^T \langle p_t,d_t\rangle,
    \qquad
    L_T(i) := \sum_{t=1}^T d_t(i),
\]
and define $\widehat{\bar L}_T$ and $\widehat L_T(i)$ analogously using
$\widehat d_t$. Let $i^* = \arg\min_i L_T(i)$. Since $d_t(h^+)=0$ and the algorithm samples from $p_t$ with probability $1-\rho$,
the learner's expected shifted loss is $(1-\rho)\E[\bar L_T]$. Thus
\[
    \E[\mathfrak R_T]
    \le
    \E[\bar L_T-L_T(i^*)]+\rho T,
\]
where the $\rho T$ term accounts for the exploration rounds and uses
$|d_t(i)|\le 1$. We now compare the true shifted losses to the estimated losses. Decompose
\[
    \bar L_T-L_T(i^*)
    =
    \underbrace{\widehat{\bar L}_T-\widehat L_T(i^*)}_{\text{Hedge}}
    +
    \underbrace{\bar L_T-\widehat{\bar L}_T}_{\text{bias}}
    +
    \underbrace{\widehat L_T(i^*)-L_T(i^*)}_{\text{deviation}} .
\]
The three estimator properties established above control these terms directly.
Boundedness allows the standard Hedge argument, and the constant
second-moment bound gives
\[
    \E[\widehat{\bar L}_T-\widehat L_T(i^*)]
    \le
    \frac{\log n}{\eta}+O(\eta T).
\]
The bias guarantee gives
$
    \E[\bar L_T-\widehat{\bar L}_T]\le O(\eta T),
$
while the uniform deviation guarantee gives
\[
    \E[\widehat L_T(i^*)-L_T(i^*)]
    \le
    \E\!\left[\max_i(\widehat L_T(i)-L_T(i))\right]
    \le
    \frac{\log n}{\eta}.
\]
Combining,
\[
    \E[\mathfrak R_T]
    \le
    O\!\left(\frac{\log n}{\eta}+\eta T+\rho T\right).
\]
With $
    \eta=\frac12\sqrt{\frac{\log n}{T}},
    \rho=\sqrt{\frac{\log n}{T}},$
this becomes $O(\sqrt{T\log n})$. If $T<\log n$, the trivial regret bound
$\mathfrak R_T\le T$ is already at most $\sqrt{T\log n}$, completing the proof.

\subsubsection{Lower Bound} In this section, we show a lower bound for proper learners. The lower bound is stated formally in the following Theorem.

\begin{theorem}\label{thm:agnostic-lower-bound}
For any (possibly randomized) proper learning algorithm and all integers
$n$, there exist a manipulation graph $G$ and a hypothesis class
$\mc H$ with $|\mc H| = n$ such that the algorithm must incur expected regret at least $\Omega\!\left(\min\!\left\{\sqrt{T \log n} + n, \sqrt{T n \log n}\right\}\right).$
\end{theorem}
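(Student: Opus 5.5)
The plan is to split the bound into three pieces and prove each separately. The $\Omega(\sqrt{T\log n})$ term is inherited from the non-strategic lower bound of \cite{ben2009agnostic}: take a graph with no edges, so that strategic loss equals ordinary $0$-$1$ loss and the setting is standard online learning with a class of size $n$.

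The substantive part is to realize a proper-learner bandit problem inside the strategic model. Once that is done, the remaining two terms follow from standard multi-armed bandit lower bounds with $n$ arms: $\Omega(\sqrt{Tn})$ in general, and $\Omega(\sqrt{Tn\log n})$ for adaptive or high-probability variants of the bandit-experts problem. Together these give $\Omega(\min\{\sqrt{Tn\log n},\, n+\sqrt{T\log n}\})$ after case analysis on $T$ versus $n$.

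I would reuse the construction from Section~\ref{sec:realizable-LB}. It has the two-layer graph $U,P$ with $u_i$ adjacent to all $p_j$ for $j\neq i$, and the singleton class $h^i=\Ind\{p_i\}$.

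On agent $(u_j,-1)$, hypothesis $h^i$ errs if and only if $i\neq j$. On agent $(u_j,+1)$, $h^i$ errs if and only if $i=j$. What the proper learner observes when playing $h^i$ is $z_t$, which is either $u_j$ (reveal event (i)) or some $p_k$.

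The crucial point is to design the agents so that the observation reveals only the learner's own loss. So I would modify the tie-breaking, or add a gadget vertex between each $u_j$ and $P$, so that a manipulating agent lands at $p_i$ (the learner's own positive point) and $z_t$ carries no information about $j$ beyond the learner's own loss bit. I expect exactly this to be the main obstacle.

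Given such a gadget, the adversary draws a hidden good arm $i^*$. Each round it plays $(u_{J_t},y_t)$, with $J_t$ and $y_t$ randomized so that the loss of $h^i$ is $\mathrm{Ber}(1/2-\epsilon\Ind\{i=i^*\})$. For example, mix the agents $(u_{i^*},+1)$ and $(u_{j},-1)$ for random $j$, with probabilities tuned so that the losses are roughly independent and only the played arm's loss is visible.

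A KL or Fano argument over the $n$ hypotheses (Auer et al.\ style) with $\epsilon=\Theta(\sqrt{n/T})$ then gives $\Omega(\sqrt{Tn})$. To get the extra $\sqrt{\log n}$, I would use the $\log n$-armed block trick of bandits with expert advice (Seldin–Lugosi). Split into $\sqrt{\cdot}$ blocks, or embed experts so that $\log n$ bits must be identified, with $\epsilon=\Theta(\sqrt{n\log n/T})$ whenever $T\ge n\log n$.

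For the additive $n$ term when $T$ is small, I would use the hidden-index proper adversary from Section~\ref{sec:realizable-LB}. There each wrong guess reveals only one index, forcing $\Omega(\min\{T,n\})$ mistakes while the best hypothesis has zero loss. This gives $\Omega(n)$ once $T\gtrsim n$; for $T\lesssim n$ it gives $\Omega(T)$, which dominates $\sqrt{Tn\log n}$ only up to the min.

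Finally, I would check the regimes. If $T\ge n^2/\log n$, the $n+\sqrt{T\log n}$ branch is the minimum, and it is covered by the first and last pieces. Otherwise the $\sqrt{Tn\log n}$ branch is covered by the embedded bandit argument.

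The hardest step is verifying that the embedded feedback is truly bandit-like. In particular, case (ii) reveals must never occur for singleton hypotheses, and the landing point $z_t$ must not leak $J_t$; if it does, the learner gets more than bandit feedback and the lower bound fails.
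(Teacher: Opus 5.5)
\textbf{Your regime analysis is wrong, and it sends you after a bound that is neither needed nor true.}

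Since $\sqrt{Tn\log n}\ge\max\{n,\sqrt{T\log n}\}$ as soon as $T\ge n/\log n$, the minimum in the statement is $\Theta(n+\sqrt{T\log n})$ for all $T\gtrsim n/\log n$, not only for $T\ge n^2/\log n$. For $T\lesssim n/\log n$ one has $T\le\sqrt{Tn\log n}$. So, reading the statement with the implicit cap $\min\{T,\cdot\}$ (without which it is false, e.g.\ at $T=1$), the target there is just $\Theta(T)$.

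Hence, up to constants, the $\sqrt{Tn\log n}$ branch is never the active one. The theorem is equivalent to $\Omega(\max\{\sqrt{T\log n},\min\{n,T\}\})$. In particular, in your middle regime $n/\log n\lesssim T<n^2/\log n$ the target is $\Theta(\min\{n,T\})$, not $\sqrt{Tn\log n}$.

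\textbf{The embedded-bandit step, which you invoke precisely for $T<n^2/\log n$, would fail.} The bound you aim for is false for proper learners in much of that range. The remark on proper learning in Section~\ref{sec:agnostic} gives a proper algorithm (FTRL with log-barrier) with regret $O(\sqrt{T\log n}+n\log(nT))$. For example, at $T=n^{3/2}$ this is $O(n\log n)$, whereas you claim $\Omega(n^{5/4}\sqrt{\log n})$.

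The structural obstruction is reveal event (i), and no gadget can remove it. The shifted loss $d_t(i)=y_t\Ind\{h^i\in R_t\}$ is nonzero only for hypotheses that are negative on all of $N_G[x_t]$. Playing any such hypothesis forces $z_t=x_t$, which exposes the entire loss vector, so the feedback is never bandit-like. In your own instance, playing $h^j$ on $(u_j,\pm1)$ shows $z_t=u_j$ and thus reveals $j$. When $h^i$ with $i\neq j$ is played, the agent already lands at $p_i$. So tie-breaking is not the real issue, and the extra-$\sqrt{\log n}$ bandit-experts argument is moot.

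\textbf{How to repair it.} What you actually need is already in your first and third pieces:
\begin{enumerate}
\item Put the edgeless instance (with a class of Littlestone dimension $\Theta(\log n)$) and the two-layer hidden-index instance into one disjoint-union graph.
\item Let $h^i$ be the $i$-th hypothesis on each component.
\item Let the adversary play in whichever component gives the larger bound. This yields $\Omega(\max\{\sqrt{T\log n},\min\{n,T\}\})$.
\item Finish with the elementary case analysis above.
\end{enumerate}
This is exactly the paper's proof.
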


\begin{proof}  We combine two hard instances. Let $(G_1,\mc H_1)$ be the realizable construction from
Section~\ref{sec:realizable-LB} with $\mc H_1=\{h_{1,1},\dots,h_{1,n}\}$, which yields regret $\Omega(\min\{n,T\})$ for any proper learner in the realizable setting which is a special case of the agnostic setting. Let $(G_2,\mc H_2)$ be an instance with no effective manipulation (e.g., $G_2$ has no edges), together with a class $\mc H_2=\{h_{2,1},\dots,h_{2,n}\}$ of Littlestone dimension $\Theta(\log n)$; by the standard agnostic online lower bound of \cite{ben2009agnostic}, any learner then incurs regret $\Omega(\sqrt{T\log n})$.

Define $G$ as the disjoint union of $G_1$ and $G_2$. Define $\mc H=\{h^1,\dots,h^n\}$ by setting, for each $i\in[n]$, $h^i|_{V(G_1)}=h_{1,i}$ and $h^i|_{V(G_2)}=h_{2,i}$, that is extend each hypothesis by matching
the $i$th hypothesis on the other component. At the start of the interaction, the adversary chooses which ever construction procedures larger regret (depending on $n$ and $T$). Since the constructions are effectively disconnected, the interaction is reduced to the corresponding original instance. Therefore every proper learner satisfies $ \E[\mathfrak R_T] \;=\; \Omega\!\left(\max\!\left\{\sqrt{T\log n},\min\{n,T\}\right\}\right).$ Finally, a routine case analysis (given in Appendix~\ref{prf:agnostic-lower-bound-case-analysis}) shows that
\[
\Omega\!\left(\max\!\left\{\sqrt{T \log n},\ \min\{n,T\}\right\}\right)
=
\Omega\!\left(\min\!\left\{\sqrt{T \log n} + n,\ \sqrt{T n \log n}\right\}\right),
\]
which completes the proof.
\end{proof}

\bibliography{refs}

\newpage
\appendix

\section{Proofs of Section \ref{sec:realizable}}\label{sec:appendix-A}

\subsection{Proof of Theorem \ref{thm:finite-realizable-upper-bound}}
Fix round $t$ and agent $(x_t,y_t)$ and recall the definition of $\delta_t$. We can equivalently view sampling $h_t$ as first flipping a coin that lands on heads with probability $p$. If the coin lands on heads then the learner sets $h_t = h^+$. Otherwise, if the coin lands on tails, $h_t \sim \mathrm{Unif}(V_t)$. Then, if the learner makes a mistake on round $t$ and flipped heads, we call this a \textit{exploration mistake}. Otherwise, we call it an \textit{expert mistake}. We can write the total number of expected mistakes as
\[\E[M(T)] = \E[\#\text{ of expert mistakes}] + \E[\#\text{ of exploration mistakes}].\]
Then 
\begin{equation}\label{eqn:head}\E[\#\text{ of exploration mistakes}] \leq \E[\#\text{ of heads}] = pT.\end{equation}
Next, we analyze $\E[\#\text{ of expert mistakes}]$. We first show that 
\begin{equation}\label{eqn:progress}\E[\log |V_t|/|V_{t+1}|] \geq p\delta_t.\end{equation}
Observe, that with probability $1-p$ we flip tails and set $|V_{t+1}| = |V_t|$ so $\log (|V_t|/|V_{t+1}|) = 0$. On the other hand, with probability $p$ we flip heads and have $|V_{t+1}| = (1-\delta_t)|V_t|$. Then
\[\log (|V_t|/|V_{t+1}|) = \log\left(\frac{1}{1-\delta_t}\right) \geq \delta_t\]
and the result follows. Note the last inequality follows from the fact that $\log(1/(1-x)) \geq x$ for $x \in (0,1]$ and base $e$. Next, let $A_t$ be event that a tail mistake happens on round $t$. Then 
\[\Pr[A_t] = \Pr[\text{tails}]\Pr[\text{learner makes a mistake} \mid \text{tails}] = (1-p)\delta_t.\]
Combining the above with \ref{eqn:progress} we have
\begin{equation}\label{eqn:tails}
    \Pr[A_t] \leq \left(\frac{1-p}{p}\right) \E[\log |V_t|/|V_{t+1}|]
\end{equation}
Then summing over the rounds and applying \ref{eqn:tails} we have
\begin{equation}\label{eqn:tails-total}
    \E[\#\text{ of expert mistakes}] = \sum_{t=1}^T \Pr[A_t] \leq \left(\frac{1-p}{p}\right)\E\left[\sum_{t=1}^T \log |V_t|/|V_{t+1}|\right] \leq \left(\frac{1-p}{p}\right) \log n. 
\end{equation}
Finally, combining \ref{eqn:head} and \ref{eqn:tails-total} we get
\[\E[M_A(T)] \leq pT + \left(\frac{1-p}{p}\right)\log n = O\left(\sqrt{T\log n}\right)\]
where the last equality follows by our choice of $p = \min(1, \sqrt{\log n/T})$.

\subsection{Pseudocode for Infinite Realizable Algorithm}\label{sec:pseudocode-for-infinite}

For an expert $E$, let $h_E$ denote the classifier currently output by its
copy of $\mathcal A$, and let $E(x,y)$ denote the expert obtained by feeding
$(x,y)$ to that copy.

\begin{algorithm2e}[t]
\caption{Expert-Mix}
\label{alg:mix-expert}
\DontPrintSemicolon

Set $p \leftarrow \min\!\left\{1,\sqrt{M_{\mathcal A}\log(2\bar\Delta)/T}\right\}$,
$\mathcal E_1 \leftarrow \{E_\emptyset\}$, and $w_1(E_\emptyset)\leftarrow 1$.\;

\For{$t \leftarrow 1$ \KwTo $T$}{
  Set $W_t \leftarrow \sum_{E\in\mathcal E_t} w_t(E)$ and sample
  $C_t \sim \mathrm{Ber}(p)$.\;

  \eIf{$C_t = 1$}{
    Play $h_t = h^+$ and receive $(z_t,y_t)=(x_t,y_t)$.\;
    Initialize $\mathcal E_{t+1}\leftarrow \emptyset$.\;

    \For{$E\in\mathcal E_t$}{
      \eIf{$\ell_G^{\mathrm{str}}(h_E,(x_t,y_t))=0$}{
        Add $E$ to $\mathcal E_{t+1}$ with weight $w_{t+1}(E)\leftarrow w_t(E)$.\;
      }{
        \eIf{$y_t=-1$}{
          Choose any $x\in N_G[x_t]$ such that $h_E(x)=+1$.\;
          Add $E' := E(x,-1)$ to $\mathcal E_{t+1}$ with weight
          $w_{t+1}(E')\leftarrow w_t(E)/2$.\;
        }{
          \For{$x\in N_G[x_t]$}{
            Add $E' := E(x,+1)$ to $\mathcal E_{t+1}$ with weight
            $w_{t+1}(E')\leftarrow w_t(E)/(2|N_G[x_t]|)$.\;
          }
        }
      }
    }
  }{
    Sample $E_t\in\mathcal E_t$ with probability $w_t(E_t)/W_t$.\;
    Play $h_t=h_{E_t}$ and receive $(z_t,y_t)$.\;
    Set $\mathcal E_{t+1}\leftarrow \mathcal E_t$ with unchanged weights.\;
  }
}
\end{algorithm2e}





\subsection{Proof of Theorem \ref{thm:infinite-realizable-upper-bound}}

\begin{lemma}\label{lem:exp-mistake-prob-inf}
    For all rounds $t$, we have
    \[
        \Pr[\text{expert mistake at round } t] \leq \frac{2(1-p)}{p} \cdot \E\left[\log\frac{W_t}{W_{t+1}}\right].
    \]
\end{lemma}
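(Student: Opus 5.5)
Fix the round $t$ and condition on the history $\mathcal F_{t-1}$ and on the agent $(x_t,y_t)$, which the adversary chooses before the coin $C_t$ and the expert sample are drawn. Under this conditioning the expert system $\mathcal E_t$ and its weights are fixed. Let
\[
\delta_t := \frac{1}{W_t}\sum_{E\in\mathcal E_t:\ \ell_G^{\mathrm{str}}(h_E,(x_t,y_t))=1} w_t(E)
\]
be the weight fraction of experts whose classifier errs on $(x_t,y_t)$. On a tails round the learner plays $h_{E_t}$ with $E_t$ drawn proportionally to weight, and its loss is exactly $\ell_G^{\mathrm{str}}(h_{E_t},(x_t,y_t))$. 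Hence the conditional probability of an expert mistake is exactly $(1-p)\delta_t$. The plan is to show
\[
\E\Big[\log\tfrac{W_t}{W_{t+1}}\ \Big|\ \mathcal F_{t-1},x_t,y_t\Big]\ \ge\ \tfrac{p}{2}\,\delta_t,
\]
and then take expectations, exactly as in the finite case (eq.~\eqref{eqn:progress}).

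\textbf{Tails rounds.} With probability $1-p$ the coin lands tails, and Algorithm~\ref{alg:mix-expert} leaves the experts and weights unchanged. So $W_{t+1}=W_t$ and the log ratio is $0$.

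\textbf{Heads rounds.} With probability $p$ the learner plays $h^+$ and the update is applied.
\begin{itemize}
\item A correct expert keeps its weight $w_t(E)$.
\item An incorrect expert $E$ with $y_t=-1$ is replaced by a single expert $E(x,-1)$ of weight $w_t(E)/2$. The required $x$ exists because a strategic mistake on a negative agent means some vertex of $N_G[x_t]$ is labeled positive by $h_E$.
\item An incorrect expert with $y_t=+1$ is replaced by $|N_G[x_t]|$ experts, each of weight $w_t(E)/(2|N_G[x_t]|)$, for a total of $w_t(E)/2$.
\end{itemize}
One bookkeeping point must be checked here: the new experts should be distinct entries of $\mathcal E_{t+1}$ (or have their weights summed if they coincide), so that the weights simply add. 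Granting this, every incorrect expert has its weight halved and
\[
W_{t+1}=W_t\big(1-\tfrac{\delta_t}{2}\big).
\]
Using $\log\frac{1}{1-x}\ge x$ with $x=\delta_t/2\in[0,1/2]$ gives $\log(W_t/W_{t+1})\ge \delta_t/2$.

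\textbf{Combining.} The conditional expectation of the log ratio is at least $p\delta_t/2$. Therefore the conditional probability of an expert mistake satisfies
\[
(1-p)\delta_t\ \le\ \frac{2(1-p)}{p}\,\E\Big[\log\tfrac{W_t}{W_{t+1}}\ \Big|\ \mathcal F_{t-1},x_t,y_t\Big].
\]
Taking the outer expectation over the history and the adversary's choice, by the tower rule, gives the lemma.

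The main obstacle I expect is the conditioning and ordering. The adversary may choose $(x_t,y_t)$ adaptively after seeing $\mathcal D_t$, but it must do so before $C_t$ and the expert sample are drawn. This is what makes $\delta_t$ a $\mathcal G_t$-measurable quantity shared by both branches, so that the mistake probability and the weight decrease are computed with respect to the same $\delta_t$.
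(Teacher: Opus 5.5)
Your proof is correct and follows the paper's argument: the paper also writes the expert-mistake probability as $(1-p)\delta_t$, observes that tails rounds leave the weights unchanged while heads rounds halve the total weight of incorrect experts, so that $W_{t+1}\le W_t(1-\delta_t/2)$, and then applies $\log\frac{1}{1-x}\ge x$. Your explicit conditioning on the history and the agent, followed by the tower rule, makes rigorous what the paper leaves implicit. Your bookkeeping caveat is harmless, because the argument only needs the inequality $W_{t+1}\le W_t(1-\delta_t/2)$, which holds however coinciding experts are handled.
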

\begin{proof}
    For each expert $E$, denote by $z_E(x_t)$ the best response from $x_t$ of $h_E$. Let $A_t$ be the event that an \emph{expert} mistake occurs at round $t$. Then 
    \[
        \Pr[A_t] = (1-p) \delta_t, \tag{1}
    \]
    where $\delta_t$ is the weighted proportion of all experts which are incorrect at round $t$. That is
    \[
        \delta_t := \frac{\sum_{\{E \in \mathcal E_t : h_E(z_E(x_t)) \neq y_t\}} w_t(E) }{W_t}.
    \]
    Now, we can show that
    \[
        \E\left[\log\frac{W_t}{W_{t+1}}\right]
        \geq p\cdot \frac{\delta_t}{2} \tag{2}
    \]
    because on an exploration round (which happens with probability $p$), all incorrect experts are replaced by experts whose total weight is half that of those which were initially incorrect. This means that
    \[
        W_{t+1} \leq W_t (1-\delta_t/2).
    \]
    Using that $\log(1/(1-x)) \geq x$ for $x \in [0,1)$, we obtain
    \[
        \log \frac{W_t}{W_{t+1}} \geq \frac{\delta_t}{2}.
    \]
    We immediately deduce $(2)$ which together with $(1)$ finishes the lemma.
\end{proof}

\begin{lemma}\label{lem:realizable-expert-exists}
    At any round $t$, there is some expert $E \in \mathcal E_t$ whose associated sequence of agents is realizable and forces mistakes on each round.
\end{lemma}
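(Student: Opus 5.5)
We argue by induction on $t$, with a strengthened invariant. Let $h^\star\in\mathcal H$ be a hypothesis with zero strategic loss on the whole (realizable) agent sequence. Call an expert $E$ \emph{good} if its associated sequence $(x^{(1)},y^{(1)}),\dots,(x^{(\ell)},y^{(\ell)})$ satisfies two conditions. First, $h^\star(x^{(j)})=y^{(j)}$ for every $j$, so the sequence is realizable (non-strategically) by $h^\star\in\mathcal H$. Second, feeding it into $\mathcal A$ causes $\mathcal A$ to err on every element. We show that $\mathcal E_t$ always contains a good expert. The base case $t=1$ is immediate: $E_\emptyset$ has the empty sequence, which is trivially realizable and vacuously forces mistakes.

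For the inductive step, fix a good $E\in\mathcal E_t$. On a non-exploration round, $\mathcal E_{t+1}=\mathcal E_t$, so $E$ survives. On an exploration round, if $\ell_G^{\mathrm{str}}(h_E,(x_t,y_t))=0$, then $E$ is copied unchanged and stays good. Otherwise $h_E$ errs strategically on $(x_t,y_t)$, and we split on the label.

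\emph{Case $y_t=-1$.} The strategic error means some $x\in N_G[x_t]$ has $h_E(x)=+1$, and the algorithm appends $(x,-1)$ for such an $x$. Since $h^\star$ has zero strategic loss on $(x_t,-1)$, it labels all of $N_G[x_t]$ negative; otherwise the agent would reach a positive point and be misclassified. Hence $h^\star(x)=-1$, and the extended sequence is still realizable by $h^\star$. Moreover $\mathcal A$, whose current prediction is $h_E$, predicts $+1$ at $x$ and so errs on $(x,-1)$. Thus $E(x,-1)$ is good.

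\emph{Case $y_t=+1$.} The strategic error means $h_E$ labels all of $N_G[x_t]$ negative; in particular $x_t$ itself is negative, since if $h_E(x_t)=+1$ there would be no error. The algorithm adds $E(x,+1)$ for every $x\in N_G[x_t]$. Because $h^\star$ is strategically correct on $(x_t,+1)$, some $x^\star\in N_G[x_t]$ has $h^\star(x^\star)=+1$. Then $E(x^\star,+1)$ extends a $h^\star$-realizable sequence by a point $h^\star$ labels correctly, and $\mathcal A$ predicted $h_E(x^\star)=-1\neq +1$, a mistake. Thus $E(x^\star,+1)$ is good.

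The only delicate point is making the realizability notion precise. The expert sequences are fed to the standard (non-strategic) learner $\mathcal A$, so realizability must mean non-strategic consistency with a single fixed $h^\star\in\mathcal H$. The key step is translating $h^\star$'s zero strategic loss into pointwise labels on the appended vertex. This step is exactly why the update is asymmetric: a negative label pins down the whole neighborhood, while a positive label guarantees only one witness $x^\star$, which forces branching over all of $N_G[x_t]$. Fixing the same $h^\star$ throughout the induction ensures the good expert's sequence is consistent with one hypothesis. Its length is then at most $M_{\mathcal A}$, as the main argument requires.
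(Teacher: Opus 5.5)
Your proof is correct and follows the paper's argument: induction with a fixed realizing $h^\star$, leaving the expert unchanged when it is correct, and otherwise passing to $E(x,-1)$ (for $y_t=-1$) or to $E(x^\star,+1)$ for a positive witness $x^\star$ of $h^\star$ (for $y_t=+1$). You are slightly more careful than the paper in two places: you state the invariant explicitly as both pointwise consistency with $h^\star$ and mistake-forcing, and you note that the algorithm's arbitrary choice of $x$ in the $y_t=-1$ case is harmless because $h^\star$ labels all of $N_G[x_t]$ negative.
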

\begin{proof}
    First, the true agents that the chooses are realizable throughout by a fixed $h^*$. We will in particular show there is $E_t$ whose sequence realizes $h^*$.
    We proceed by induction. The base case is trivial since the associated sequence of agents is empty. Let $E_t$ be an expert whose associated sequence of agents is realizable by $h^*$. If $E_t$ remains in $\mathcal E_{t+1}$ then there is nothing to prove as its associated sequence remains the same. Thus assume otherwise. This means that round $t+1$ is an exploration round with agent $(x_{t+1},y_{t+1})$ which $E_t$ is incorrect on. If $y_{t+1} = +1$, then realizability of $(x_{t+1},+1)$ under $h^*$ means that there is some $x^* \in N_G[x_{t+1}]$ such that $h^*(x^*) = +1$. We thus choose the expert $E_{t+1} := E_t(x^*,+1) \in \mathcal E_t$. This forces a mistake for $E_t$ because $E_t$ being incorrect on that round means it labeled the entire closed neighborhood of $x_{t+1}$ negative. If $y_{t+1} = -1$, then realizability forces $h^*$ to label all $N_G[x_{t+1}]$ all negative. We can then choose $E_{t+1} := E_t(x,-1) \in \mathcal E_{t+1}$ where $x \in N_G[x_{t+1}]$, and to enforce a mistake, we additionally ensure that $E_t$ labels $x$ as positive. This is possible because $E$ being incorrect on $(x_{t+1},-1)$ ensures such an $x$ exists.
\end{proof}


\subsection{Proof of Theorem \ref{thm:realizable-LB}} Recall the basic construction described in Section \ref{sec:realizable-LB} and the following notation for round $t$:
\begin{itemize}
    \item $\mc D_t$: learners distribution.
    \item $(x_t,y_t)$: agent chosen by the adversary.
    \item $f_t \sim \mc D_t$: learners sampled hypothesis.
    \item $z_t = \mathrm{BR}_{G,f_t}(x_t)$.
    \item $k(f_t) = \min\{j \in [n] : f_t(p_j) = +1\}$.
    \item $M_t := \Ind\{f_t(z_t)\neq y_t\}$.
    \item $\gamma = \Theta(1/\sqrt{T})$.
\end{itemize}

Define $M(T) = \sum_{t=1}^T M_t$. We say that round $t$ is a \emph{case 1--3 round} if $\mathcal D_t$ triggers one of cases 1--3, and a \emph{case 4 round} otherwise. 

Define
\[
\tau := \min\left\{\left\lfloor c_1 n\right\rfloor,\ \left\lfloor \frac{c_2}{\gamma}\right\rfloor,\ \left\lfloor \frac{T}{2}\right\rfloor\right\},
\]
where $c_1,c_2\in(0,1)$ are absolute constants to be chosen later. Let $t_1<t_2<\cdots$ denote the (random) times at which the adversary plays case~4, i.e.\ $t_r$ is the time index of the $r$-th case~4 round (when it exists). For $r\geq 0$, define
\[
\mathcal{E}_r
:= \bigcap_{q=1}^r \left\{ f_{t_q}\in\mathcal{G}\ \text{ and }\ z_{t_q}=p_{k(f_{t_q})}\right\}.
\]
Then $\mc E_r$ represents the event that the learner makes a mistake in each of the first $r$ case 4 rounds.

Theorem \ref{thm:realizable-LB} follows mainly from the following Lemma which provides a lower bound for the probability of $\mc E_\tau$.

\begin{lemma}\label{lem:prob-Etau}
For every $r\in[\tau]$,
\[
\Pr[\mathcal{E}_r \mid \mathcal{E}_{r-1}]
\;\geq\;
(1-3\gamma)\left(1-\frac{1}{n-r+1}\right).
\]
Consequently,
\[
\Pr[\mathcal{E}_\tau]
\;\geq\;
(1-3\gamma)^\tau \cdot \frac{n-\tau}{n}.
\]
\end{lemma}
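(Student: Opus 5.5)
The plan is to condition on the history through the $(r-1)$-th case 4 round and on $\mathcal E_{r-1}$, and to split the conditional probability into two factors: the event that $f_{t_r}$ is good, and the event that, given goodness, the agent lands at $p_{k(f_{t_r})}$ and the learner errs.

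Here $\mathcal G:=(A\cup B\cup C)^c$. On a case 4 round the adversary's rule gives $\mathcal D_{t_r}(A),\mathcal D_{t_r}(B),\mathcal D_{t_r}(C)<\gamma$. A union bound then yields $\Pr[f_{t_r}\in\mathcal G\mid \text{history}]\ge 1-3\gamma$. This holds for every history, so it also holds conditioned on $\mathcal E_{r-1}$.

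Next I describe good classifiers. If $f\in\mathcal G$, then $f$ is $-1$ on $L$, $U$, $R$, $z$, and $f$ is $+1$ on some $p_j$, so $k(f)$ is defined. The agent is $(u_{i^*},-1)$ and $f(u_{i^*})=-1$. Its neighbors in $P$ are all $p_j$ with $j\ne i^*$, and tie-breaking picks the smallest index. So whenever $k(f)\ne i^*$, the agent moves to $p_{k(f)}$ and receives $+1$, which is a mistake. This also ties the event in $\mathcal E_r$, namely $z_{t_q}=p_{k(f_{t_q})}$, to the requirement $k(f_{t_q})\ne i^*$. (If $k(f)=i^*$ the agent may move to some larger-index $p_j$, but the event $z=p_{k(f)}$ fails either way.) Hence $\mathcal E_r$ is equivalent to: every one of the first $r$ case 4 samples is good and has $k(f_{t_q})\neq i^*$.

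The main step, and the main obstacle, is a posterior argument over $i^*$. The adversary's choices in cases 1--3 depend only on $\mathcal D_t$, not on $i^*$. The learner's observations are as follows:
\begin{itemize}
\item On case 1--3 rounds, the observation is independent of $i^*$.
\item On case 4 rounds under $\mathcal E$, the learner sees $z_t=p_{k(f_t)}$, which depends on $i^*$ only through the fact $i^*\ne k(f_t)$.
\item Rounds between case 4 rounds reveal nothing about $i^*$.
\end{itemize}
I will make this precise by induction. Conditioned on the full history up to (but excluding) round $t_r$ and on $\mathcal E_{r-1}$, $i^*$ is uniform over a set $S$ of size at least $n-r+1$, the set of indices not yet excluded. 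The learner's distribution $\mathcal D_{t_r}$ and its sample $f_{t_r}$ are independent of $i^*$ given that history, so $\Pr[k(f_{t_r})=i^*\mid f_{t_r}\in\mathcal G,\ \text{history}]\le 1/|S|\le 1/(n-r+1)$.

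The delicate points in this step are:
\begin{itemize}
\item the event $\{f\in\mathcal G\}$ is independent of $i^*$;
\item conditioning on the random case 4 times $t_r$ does not bias $i^*$, since whether a round is case 4 is determined by $\mathcal D_t$, which is history-measurable.
\end{itemize}
Multiplying the two factors and averaging over histories gives $\Pr[\mathcal E_r\mid\mathcal E_{r-1}]\ge(1-3\gamma)(1-\frac1{n-r+1})$. Here $r\le\tau\le c_1n<n$, so all terms are positive.

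Finally, chain the conditional bounds:
\[
\Pr[\mathcal E_\tau]\ge(1-3\gamma)^\tau\prod_{r=1}^{\tau}\frac{n-r}{n-r+1}=(1-3\gamma)^\tau\cdot\frac{n-\tau}{n}.
\]
The product telescopes. If fewer than $\tau$ case 4 rounds occur, I will treat the corresponding events as vacuous; the lemma is applied only on the branch where they exist.
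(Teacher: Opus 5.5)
Your proposal is correct and follows essentially the same route as the paper. Both arguments use a union bound to get $\mathcal D_{t_r}(\mathcal G)\ge 1-3\gamma$ on case~4 rounds, note that a good $f$ sends $u_{i^*}$ to $p_{k(f)}$ exactly when $k(f)\neq i^*$, use that the posterior of $i^*$ given a failure transcript is uniform over at least $n-r+1$ surviving indices and independent of $f_{t_r}$, then multiply the conditional bounds and telescope; the only difference is that the paper proves the posterior uniformity by a Bayes-rule likelihood-equality argument, where you use induction.
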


We additionally rely on the following Lemma which characterizes the loss structure of the Learner in each round $t$.

\begin{lemma}\label{lem:loss-structure}
    Fix a round $t$ and the learner's hypothesis $f_t$. Then:
    \begin{enumerate}
        \item If the adversary uses case $1,2$, or $3$, then $\Pr[M_t = 1] \geq \gamma.$
        \item Otherwise, if the adversary uses case $4$ and $f_t \in \mc G$ with $z_t = p_{k(f_t)}$, then $M_t = 1$.
    \end{enumerate}
\end{lemma}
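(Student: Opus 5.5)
The plan is to verify the two claims directly from the construction of Section~\ref{sec:realizable-LB}, the definitions of the bad sets $A,B,C$, and the best-response rule of Definition~\ref{def:br}. Throughout, $\mathcal G$ denotes the good set $(A\cup B\cup C)^c$. One point about timing matters: in Protocol~\ref{prot:interaction} the adversary fixes $(x_t,y_t)$ after seeing $\mathcal D_t$ but before $f_t\sim\mathcal D_t$ is drawn. So in each of cases 1--3 the probability of a mistake is at least the $\mathcal D_t$-mass of the bad set that triggered the case, provided every classifier in that set errs on the chosen agent.

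For item 1, I check the three cases one at a time. In each, it suffices to show that every $f$ in the triggering set makes a mistake on the corresponding agent.

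\emph{Case 1}, agent $(L,-1)$, with $f\in A$. If $f(L)=+1$, the agent stays at $L$ and is predicted $+1\neq -1$. Otherwise $f(u_i)=+1$ for some $i$. Since $L$ is adjacent to all of $U$, the set $\mathrm{BR}_{G,f}(L)$ is nonempty, so the agent moves to a positively labeled vertex. This is again a mistake.

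\emph{Case 2}, agent $(z,-1)$, with $f\in B$. We have $N_G[z]=\{z,R\}$. Since $f(z)=+1$ or $f(R)=+1$, the agent ends at a positive vertex (staying at $z$ or moving to $R$) while $y=-1$. This is a mistake.

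\emph{Case 3}, agent $(R,+1)$, with $f\in C$. We have $N_G[R]=P\cup\{R,z\}$, and $f$ labels all of it $-1$. Hence $\mathrm{BR}_{G,f}(R)=\emptyset$, the agent stays, and the prediction $-1\neq+1$ is a mistake.

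Combining the three cases, $\Pr[M_t=1]\ge \mathcal D_t(\text{bad set})\ge\gamma$.

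For item 2, the adversary plays $(u_{i^*},-1)$, and the hypothesis is that $f_t\in\mathcal G$ and $z_t=p_{k(f_t)}$. Since $f_t\notin C$, some vertex of $P\cup\{R,z\}$ is positive; $f_t\notin B$ rules out $R$ and $z$, so some $p_j$ is positive. Thus $k(f_t)$ is well defined, and by its definition $f_t(p_{k(f_t)})=+1$. Therefore $\hat y_t=f_t(z_t)=+1\neq -1=y_t$, so $M_t=1$. For consistency of the event, note that $f_t\notin A$ gives $f_t(u_{i^*})=-1$, so the agent indeed manipulates, and the hypothesis $z_t=p_{k(f_t)}$ records where it lands.

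I do not expect a real obstacle. The only step needing care is the timing remark in item 1: the probability is over $f_t\sim\mathcal D_t$ with the agent already fixed, which is what justifies the bound $\Pr[M_t=1]\ge\gamma$.
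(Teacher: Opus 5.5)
Your proof is correct and follows the paper's argument essentially line by line. For item 1 you use the same three-case check that every classifier in the triggering bad set errs on the chosen agent, giving $\Pr[M_t=1]\ge\mathcal D_t(\cdot)\ge\gamma$; for item 2 you use the same observation $f_t(z_t)=f_t(p_{k(f_t)})=+1\neq y_t$. Your additions (the timing remark, and the check via $f_t\notin B\cup C$ that $k(f_t)$ is well defined) are details the paper leaves implicit. One small point: $f_t(u_{i^*})=-1$ alone does not force the agent to move; it is the hypothesis $z_t=p_{k(f_t)}$ that does. This is harmless, since that side remark is never used.
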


Assuming the above two Lemmas, we now complete the proof of Theorem \ref{thm:realizable-LB}.

\begin{proof}[Proof of Theorem \ref{thm:realizable-LB}.] 
If there are fewer than $\tau$ case~4 rounds, then there are at least $T-\tau\geq T/2$ case~1--3 rounds, and by item 1 in Lemma~\ref{lem:loss-structure} the learner incurs at least $\gamma$ mistake probability
in each such round. Hence
\[
\E[M(T)] \;\geq\; \gamma\cdot \frac{T}{2}.
\]
Otherwise, the first $\tau$ case~4 rounds exist and by item 2 in Lemma \ref{lem:loss-structure} and Lemma~\ref{lem:prob-Etau},
\[
\E[M(T)]
\;\geq\;
\tau\cdot \Pr[\mathcal{E}_\tau]
\;\geq\;
\tau \cdot (1-3\gamma)^\tau \cdot \frac{n-\tau}{n}.
\]
Choose $c_1=\frac12$ and $c_2=\frac16$. Then $\tau\leq n/2$ implies $(n-\tau)/n \geq 1/2$, and $\tau\leq \frac{1}{6\gamma}$ implies
\[
(1-3\gamma)^\tau \geq 1-3\gamma\tau \geq \frac12.
\]
Therefore $\Pr[\mathcal{E}_\tau]\geq 1/4$ and hence
\[
\E[M(T)] \;\geq\; \frac{\tau}{4}.
\]

Finally, plugging in for $\gamma$ gives
$\tau=\Theta(\min\{n,\sqrt{T}\})$, and thus
\[
\E[M(T)] = \Omega\left(\min\left\{\sqrt{T},\,n\right\}\right),
\]
as claimed.
\end{proof}

In the remainder of the section, we prove the two Lemmas we assumed. First we prove Lemma \ref{lem:loss-structure}. 

\begin{proof}[Proof of Lemma \ref{lem:loss-structure}.]

We first show item 1. This is done in three cases.

\smallskip\noindent\textbf{Case 1:} $\mathcal D_t(A)\geq\gamma$, adversary plays $(L,-1)$.
If $f_t\in A$, then either $f_t(L)=+1$ (agent stays at $L$ and is labeled $+1$) or some $u_k$ has $f_t(u_k)=+1$ (agent moves to such a $u_k$ and is labeled $+1$). In either event the learner predicts $+1$ while $y_t=-1$, hence $M_t=1$. Thus
\[
  \Pr[M_{t}=1] \geq \mathcal D_t(A)\geq\gamma.
\]

\smallskip\noindent\textbf{Case 2:} $\mathcal D_t(B)\geq\gamma$, adversary plays $(z,-1)$.
If $f_t\in B$, then either $f_t(z)=+1$ (agent stays at $z$) or $f_t(R)=+1$ (agent moves to $R$), in either case predicting $+1$ on label $-1$. Hence $M_t=1$, and
\[
  \Pr[M_{t}=1] \geq \mathcal D_t(B) \geq \gamma.
\]

\smallskip\noindent\textbf{Case 3:} $\mathcal D_t(C)\geq \gamma$, adversary plays $(R,+1)$. If $f_t \in C$, then $f_t(p_j)=-1$ for all $j$ and $f_t(R)=f_t(z)=-1$, so the agent at $R$ has no positive reachable in one step and stays at $R$. The learner predicts $-1$ on label $+1$, so $M_t=1$. Therefore
\[
  \Pr[M_{t}=1] \geq \mc D_t(C) \geq \gamma.
\]

\smallskip\noindent Now we prove item 2. In case~4 we have $(x_t,y_t)=(u_{i^*},-1)$. If $z_t=p_{k(f_t)}$ and $f_t\in\mathcal{G}$,
then by definition we have $f_t(p_{k(f_t)})=+1$. Thus the learner predicts $f_t(z_t)=+1$ while $y_t=-1$, so $M_t=1$.
\end{proof}

To prove Lemma \ref{lem:prob-Etau}, we first introduce the notion of a \emph{transcript} and prove several important properties. 

Formally, let
\[
\mathcal{T}_t := (\mathcal D_s, f_s, z_s, y_s)_{s=1}^{t-1}
\]
denote the learner's transcript at the start of round $t$ (i.e., everything the learner has observed and recorded up to time $t-1$). We will track which indices are \emph{eliminated} by non-revealing case 4 rounds. Define
\begin{align*}
E_t(\tau)
&:= \{\, k(f_s) \in[n] : s<t,\ \text{round $s$ is case 4, } f_s\in\mathcal{G},\ \text{and } z_s = p_{k(f_s)} \,\},\\
S_t(\tau)
&:= [n]\setminus E_t(\tau).
\end{align*}
We say that a transcript $\tau$ is a \emph{failure transcript up to time $t$} if every past case 4 round was both:
(i) sampled from $\mathcal{G}$, and (ii) \emph{non-revealing}, meaning the observed move was $z_s=p_{k(f_s)}$.
Formally, define 
\[
\mathrm{Fail}_t
:=
\left\{\mathcal{T}_t:\ \forall s<t\text{ with round $s$ case 4, we have } f_s\in\mathcal{G}\text{ and } z_s=p_{k(f_s)}\right\}.
\]

The following Lemma shows that if $i^*$ has not been eliminated at round $t$ then the posterior of $i^*$ is uniform. 

\begin{lemma}\label{lem:posterior-uniform}
Fix $t$ and let $\tau$ be any realization with $\Pr[\mathcal{T}_t=\tau]>0$ and $\tau\in\mathrm{Fail}_t$.
Then
\[
\Pr[i^*=j \mid \mathcal{T}_t=\tau]
\;=\;
\frac{\Ind\{j\in S_t(\tau)\}}{|S_t(\tau)|}.
\]
In particular, for every $j\in S_t(\tau)$,
\[
\Pr[i^*=j \mid \mathcal{T}_t=\tau]
\;=\;
\frac{1}{|S_t(\tau)|}.
\]
\end{lemma}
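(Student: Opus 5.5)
We will prove the statement by Bayes' rule, factoring the probability of a transcript $\tau$ given $i^*=j$ into pieces that either do not depend on $j$ or depend on it only through the indicator $\Ind\{j\in S_t(\tau)\}$. Since $i^*\sim\mathrm{Unif}([n])$ independently of the learner's internal randomness,
\[
\Pr[i^*=j\mid \mathcal T_t=\tau]=\frac{\Pr[\mathcal T_t=\tau\mid i^*=j]}{\sum_{j'}\Pr[\mathcal T_t=\tau\mid i^*=j']}.
\]
It therefore suffices to show that $\Pr[\mathcal T_t=\tau\mid i^*=j]=c(\tau)\cdot\Ind\{j\in S_t(\tau)\}$ for some $c(\tau)$ independent of $j$. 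Normalizing then gives the claim, and $c(\tau)>0$ because $\Pr[\mathcal T_t=\tau]>0$.

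We expand the likelihood round by round, for $s<t$. In round $s$, $\mathcal D_s$ is a function of the learner's past transcript and internal randomness, so its conditional probability does not involve $j$. Which case is triggered (1--4) is a deterministic function of $\mathcal D_s$, and hence also does not involve $j$. The draw $f_s\sim\mathcal D_s$ contributes $\mathcal D_s(f_s)$, again independent of $j$. The label $y_s$ is determined by the case, since case 4 always gives $-1$. The only place $j$ can enter is the response $z_s$. In cases 1--3 the agent is $(L,-1)$, $(z,-1)$ or $(R,+1)$, none of which involves $i^*$, so $z_s$ is a deterministic function of $f_s$ and the case, and the factor is independent of $j$. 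In a case 4 round the agent is $(u_j,-1)$, and $z_s$ is the deterministic best response of $u_j$ to $f_s$.

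The key step is to analyze this case 4 factor for $\tau\in\mathrm{Fail}_t$, where $f_s\in\mathcal G=(A\cup B\cup C)^c$ and $z_s=p_{k(f_s)}$. Because $f_s\notin A$, $f_s$ is negative on $u_j$, so the agent seeks a positive neighbor. The neighbors of $u_j$ are $L$ and $P\setminus\{p_j\}$. Since $f_s(L)=-1$ and $f_s\notin C$, some $p$ is positive, so $k(f_s)$ is defined. Under smallest-index tie-breaking, the agent moves to $p_{k(f_s)}$ exactly when $j\neq k(f_s)$. When $j=k(f_s)$, the agent moves to the next positive $p$ if there is one, or otherwise stays at $u_j$; in either case $z_s\neq p_{k(f_s)}$. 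So the factor equals $\Ind\{j\neq k(f_s)\}$.

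Multiplying over all rounds gives
\[
\Pr[\mathcal T_t=\tau\mid i^*=j]=c(\tau)\prod_{s\text{ case }4}\Ind\{j\neq k(f_s)\}=c(\tau)\,\Ind\{j\in S_t(\tau)\},
\]
which is exactly the required form.

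The main obstacle is making the conditioning rigorous. The learner is randomized and may choose $\mathcal D_s$ adaptively. We handle this by including the learner's internal random seed, or equivalently by treating each $\mathcal D_s$ as a kernel of the transcript, so that the joint law factors as a product of learner kernels, which are free of $j$, and environment responses. A further subtlety is that the case classification uses $\mathcal D_s$, which the learner has not yet influenced with $i^*$ except through past $z$'s; since $\tau$ fixes all past $z$'s, this classification is determined by $\tau$. A minor point is the edge case where $f_s$ labels no $p$ positive, which is excluded by $f_s\notin C$.
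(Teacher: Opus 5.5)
Your proposal is correct and follows the paper's argument. Both use Bayes' rule with a round-by-round factorization of the likelihood in which only the case-4 responses can depend on $i^*$, and on a failure transcript those responses equal $p_{k(f_s)}$ for every $j\neq k(f_s)$. Your version is slightly more complete: computing the case-4 factor as $\Ind\{j\neq k(f_s)\}$ (via $p_j\notin N_G(u_j)$ and $f_s(L)=f_s(u_j)=-1$) actually proves the zero posterior on eliminated indices, which the paper only asserts. One small slip is that the existence of a positive vertex in $P$ follows from $f_s\notin B$ together with $f_s\notin C$, not from $f_s(L)=-1$.
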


\begin{proof}
Fix $\tau$ with $\Pr[\mathcal{T}_t=\tau]>0$ and $\tau\in\mathrm{Fail}_t$.
Write $E:=E_t(\tau)$ and $S:=S_t(\tau)=[n]\setminus E$. For any $j \in E$, $\Pr[i^* = j \mid \mc T_t = \tau] = 0$. Thus, it suffices to only consider $j \in S$. We finish the proof in two steps.
\smallskip

\noindent\emph{Step 1:} We show that $\Pr[\mathcal{T}_t=\tau\mid i^*=j]$ is the same for all $j\in S$. Fix any $j\in S$ and consider generating the transcript round-by-round.
For each $s<t$, conditional on the prefix transcript $(\mathcal D_r,f_r,z_r,y_r)_{r=1}^{s-1}$,
the learner's choice of $\mathcal D_s$ and its sample $f_s\sim\mathcal D_s$ depend only on the learner's internal randomness and the prefix, and therefore are conditionally independent of $i^*$.
Moreover, the adversary's case choice at time $s$ is a deterministic function of $\mathcal D_s$ (hence independent of $i^*$ given the prefix).

\begin{itemize}
\item If round $s$ is case 1--3, then $(x_s,y_s)\in\{(L,-1),(z,-1),(R,+1)\}$ does not depend on $i^*$. Hence $z_s=\mathsf{BR}_{f_s}(x_s)$ and $y_s$ are determined by $(f_s,\mathcal D_s)$ and do not depend on $i^*$.

\item If round $s$ is case 4, then $\tau\in\mathrm{Fail}_t$ implies $f_s\in\mathcal{G}$ and $z_s=p_{k(f_s)}$ in $\tau$. Let $k_s:=k(f_s)$. Since $k_s\in E$ (by definition of $E$) we have $j\neq k_s$ for every $j\in S$. For such $j\neq k_s$, the best-response rule for $u_j$ under $f_s\in\mathcal{G}$ gives
\[
\mathrm{BR}_{G,f_s}(u_j)=p_{k_s},
\]
so the observed move matches $\tau$ for every $i^*=j\in S$.
\end{itemize}

Therefore, at every round $s<t$, the conditional probability of producing the $s$-th transcript element of $\tau$ given the prefix is identical for all $i^*=j$ with $j\in S$. Thus, we conclude that
\[
\Pr[\mathcal{T}_t=\tau \mid i^*=j]=\lambda(\tau)\qquad\text{for all } j\in S
\]
for some value $\lambda(\tau)\geq 0$.

\smallskip

\noindent\emph{Step 2:}
Using $\Pr[i^*=j]=1/n$ and Step 1,
\[
\Pr[i^*=j\mid \mathcal{T}_t=\tau]
=
\frac{\Pr[\mathcal{T}_t=\tau\mid i^*=j]\Pr[i^*=j]}{\sum_{j'\in[n]}\Pr[\mathcal{T}_t=\tau\mid i^*=j']\Pr[i^*=j']}
=
\frac{\lambda(\tau)\cdot (1/n)}{\sum_{j'\in S}\lambda(\tau)\cdot (1/n)}
=
\frac{1}{|S|}
\]
for $j\in S$, and it is $0$ for $j\in E$.
\end{proof}

Next make a simple observation.

\begin{lemma}\label{lem:case4-mass-on-G}
If round $t$ is case~4, then $\mathcal{D}_t(\mathcal{G}) \geq 1-3\gamma$.
Equivalently, $\Pr[f_t\in\mathcal{G} \mid \mathcal{T}_t] \geq 1-3\gamma$.
\end{lemma}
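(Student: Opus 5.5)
The approach is to identify $\mathcal G$ with the complement of the three bad sets, $\mathcal G = (A\cup B\cup C)^c$; this is how the sketch in Section~\ref{sec:realizable-LB} uses ``good'' classifiers. With that identification the claim is a union bound.

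By the definition of the adversary, round $t$ is case~4 exactly when none of cases 1--3 fires. Since the cases are checked in order, this means $\mathcal D_t(A) < \gamma$, $\mathcal D_t(B) < \gamma$ and $\mathcal D_t(C) < \gamma$ all hold simultaneously. By subadditivity of $\mathcal D_t$,
\[
\mathcal D_t(\mathcal G) = 1 - \mathcal D_t(A\cup B\cup C) \ge 1 - \mathcal D_t(A) - \mathcal D_t(B) - \mathcal D_t(C) > 1 - 3\gamma .
\]

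For the equivalent conditional form, note that $\mathcal D_t$ is a deterministic function of the transcript $\mathcal T_t$ together with the learner's internal state. More precisely, $\mathcal D_t$ is recorded in $\mathcal T_{t+1}$, and whether round $t$ is case~4 is determined by $\mathcal D_t$ alone. The sample $f_t$ is then drawn from $\mathcal D_t$ using fresh randomness, independent of $i^*$ given the prefix; this is the same conditional independence used in Step~1 of Lemma~\ref{lem:posterior-uniform}. Hence $\Pr[f_t\in\mathcal G \mid \mathcal T_t, \mathcal D_t] = \mathcal D_t(\mathcal G)$ on the event that round $t$ is case~4. Averaging over $\mathcal D_t$ given $\mathcal T_t$ (restricted to case~4) gives the bound $\ge 1-3\gamma$.

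There is essentially no obstacle here. The only care needed is bookkeeping: the conditioning in the second form must include (or be on the event determined by) the case-4 trigger, which depends on $\mathcal D_t$. I would state it as conditioning on $\mathcal T_t$ and $\mathcal D_t$ to avoid ambiguity.
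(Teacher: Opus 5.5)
Your proof is correct and follows the paper's proof: case~4 means $\mathcal D_t(A),\mathcal D_t(B),\mathcal D_t(C)<\gamma$, a union bound over $\mathcal G=(A\cup B\cup C)^c$ gives the first claim, and the conditional form follows because $f_t\sim\mathcal D_t$ gives $\Pr[f_t\in\mathcal G\mid\cdot]=\mathcal D_t(\mathcal G)$. Conditioning on $\mathcal D_t$ as well as $\mathcal T_t$ is a sound tightening of the paper's bookkeeping, since $\mathcal T_t$ as defined records only $\mathcal D_s$ for $s<t$.
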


\begin{proof}
If round $t$ is case~4, then by definition of the adversary we have
$\mathcal{D}_t(A)<\gamma$, $\mathcal{D}_t(B)<\gamma$, and $\mathcal{D}_t(C)<\gamma$. Since $\mathcal{G}=(A\cup B\cup C)^c$,
\[
\mathcal{D}_t(\mathcal{G})
=1-\mathcal{D}_t(A\cup B\cup C)
\geq 1-\mathcal{D}_t(A)-\mathcal{D}_t(B)-\mathcal{D}_t(C)
>1-3\gamma.
\]
Conditioning on $\mathcal{T}_t$, the draw $f_t\sim\mathcal{D}_t$ implies
$\Pr[f_t\in\mathcal{G}\mid \mathcal{T}_t]=\mathcal{D}_t(\mathcal{G})$.
\end{proof}

We are now ready to prove Lemma \ref{lem:prob-Etau}.

\begin{proof}[Proof of Lemma \ref{lem:prob-Etau}.]
Fix $r\in[\tau]$ and condition on $\mathcal{E}_{r-1}$.
On this event, the transcript at the start of round $t_r$ is a failure transcript in the sense
of $\mathrm{Fail}_{t_r}$. Let $\mathcal{T}_{t_r}$ denote the transcript at the start of round $t_r$.
By Lemma~\ref{lem:case4-mass-on-G},
\[
\Pr[f_{t_r}\in\mathcal{G}\mid \mathcal{T}_{t_r}] \geq 1-3\gamma.
\]

Next assume $f_{t_r}\in\mathcal{G}$ and write $k_r := k(f_{t_r})$.
In Case~4 the underlying vertex is $u_{i^*}$. If $i^*\neq k_r$, then by the definition of $\mathsf{BR}_{G,f_{t_r}}(\cdot)$ we have
$\mathrm{BR}_{G,f_{t_r}}(u_{i^*})=p_{k_r}$, i.e.\ $z_{t_r}=p_{k_r}$.
Thus
\[
\Pr[z_{t_r} = p_{k_r}\mid \mathcal{T}_{t_r}, f_{t_r}]
\;\geq\;
\Pr[i^* \neq k_r \mid \mathcal{T}_{t_r}]
=
1-\Pr[i^*=k_r\mid \mathcal{T}_{t_r}].
\]
Because we are conditioning on a failure transcript, Lemma~\ref{lem:posterior-uniform} applies:
\[
\Pr[i^*=k_r\mid \mathcal{T}_{t_r}]
\leq \frac{1}{|S_{t_r}(\mathcal{T}_{t_r})|}.
\]
Moreover, on $\mathcal{E}_{r-1}$ the eliminated set has size at most $r-1$,
so $|S_{t_r}(\mathcal{T}_{t_r})|\geq n-(r-1)=n-r+1$.
Therefore,
\[
\Pr[z_{t_r}=p_{k_r} \mid \mathcal{T}_{t_r}, f_{t_r}]
\;\geq\;
1-\frac{1}{n-r+1}.
\]

Combining the above two equations and using that the above bound holds for every realization of
$\mathcal{T}_{t_r}$ under $\mathcal{E}_{r-1}$, we get
\[
\Pr[\mathcal{E}_r \mid \mathcal{E}_{r-1}]
=
\Pr\left[f_{t_r}\in\mathcal{G}\ \text{and}\ z_{t_r}=p_{k(f_{t_r})}\mid \mathcal{E}_{r-1}\right]
\;\geq\;
(1-3\gamma)\left(1-\frac{1}{n-r+1}\right).
\]

Multiplying over $r=1,\dots,\tau$ yields
\[
\Pr[\mathcal{E}_\tau]
=
\prod_{r=1}^\tau \Pr[\mathcal{E}_r \mid \mathcal{E}_{r-1}]
\;\geq\;
(1-3\gamma)^\tau \cdot \prod_{r=1}^\tau \left(1-\frac{1}{n-r+1}\right).
\]
Finally, the product telescopes:
\[
\prod_{r=1}^\tau \left(1-\frac{1}{n-r+1}\right)
=
\prod_{m=n-\tau+1}^{n} \left(1-\frac{1}{m}\right)
=
\prod_{m=n-\tau+1}^{n} \frac{m-1}{m}
=
\frac{n-\tau}{n}.
\]
\end{proof}

\subsection{Proof of Corollary \ref{cor:realizable-lb-inf}}\label{sec:appendix-A-proof-of-infinite-lb}.
\begin{proof} Recall the construction in Theorem~\ref{thm:realizable-LB} with parameter $n$. That instance has $|\mc H|=n$, $\text{Ldim}(\mc H)=1$, and max degree $\Delta=\Theta(n)$, and forces $\Omega(\min\{n,\sqrt{T}\})$ expected mistakes for any randomized learner.

To obtain Littlestone dimension $d$, take $d$ disjoint copies of this instance and let $G$ be their disjoint union.
Define the hypothesis class $\mc H$ to be the product class that, on each copy, behaves like one of the $n$
singleton hypotheses (so $|\mc H|=n^d$). Since copies are independent, $\Delta(G)=\Theta(n)$ and
$\text{Ldim}(\mc H)=d$.

Now run the lower-bound adversary independently in each copy for $T/d$ rounds. Since these copies are disjoint the Theorem~\ref{thm:realizable-LB} lower bound applies in each copy, yielding expected mistakes
\[
\Omega\!\left(d\cdot \min\left\{n,\sqrt{T/d}\right\}\right)
=
\Omega\!\left(\min\left\{d\Delta,\sqrt{Td}\right\}\right).
\]
\end{proof}

\section{Proofs of Section \ref{sec:agnostic}}\label{sec:appendix-B} 

\subsection{Proof of Theorem~\ref{thm:improper-upper-bound}}
\label{sec:appendix-B-upper-bound-proof}

Throughout the proof, write $\mc H = \{h^1, \ldots, h^n\}$. We can assume that $T \geq \log n$, as the bound is trivial otherwise. Let
\[
    \eta = \frac12\sqrt{\frac{\log n}{T}}, \qquad \rho = \sqrt{\frac{\log n}{T}}.
\]
Then $\rho = 2\eta$, and since $T \geq \log n$, we have $\rho \leq 1$. Recall that for each round $t$,
\[
    R_t := \{h^i \in \mc H : h^i(x') = -1 \text{ for all } x' \in N_G[x_t]\},
\]
and the shifted loss is
\[
    d_t(i) := \ell_G^{\mathrm{str}}(h^i, (x_t, y_t)) - \Ind\{y_t = -1\} = y_t \Ind\{h^i \in R_t\}.
\]
We extend this definition to the improper classifier $h^+$ by setting $d_t(h^+) := \ell_G^{\mathrm{str}}(h^+, (x_t, y_t)) - \Ind\{y_t = -1\}$. Since $h^+$ always predicts $+1$, the strategic loss is exactly $\Ind\{y_t = -1\}$, and therefore $d_t(h^+) = 0$ for every $t$. Moreover, for every $i \in [n]$,
\[
    \ell_G^{\mathrm{str}}(h^i, (x_t, y_t)) = d_t(i) + \Ind\{y_t = -1\},
\]
so the additive shift cancels in regret comparisons:
\[
    \sum_{t=1}^T \ell_G^{\mathrm{str}}(h_t, (x_t, y_t)) - \min_{i \in [n]} \sum_{t=1}^T \ell_G^{\mathrm{str}}(h^i, (x_t, y_t)) = \sum_{t=1}^T d_t(h_t) - \min_{i \in [n]} \sum_{t=1}^T d_t(i).
\]
It therefore suffices to bound the regret in shifted-loss form. Let $\mc F_{t-1}$ denote the history before round $t$, and define $\mc G_t := \sigma(\mc F_{t-1}, x_t, y_t)$. Conditioned on $\mc G_t$, the example $(x_t, y_t)$, the distribution $p_t$, and the set $R_t$ are all fixed; the only remaining randomness is the learner's. Recall the reveal event
\[
    A_t := \Ind\{h_t = h^+ \text{ or } h_t(z_t) = -1\},
\]
under which the learner certifies $z_t = x_t$ and reconstructs $R_t$. For all rounds, define
\[
    q_t := p_t(R_t) := \sum_{h^i \in R_t} p_t(i), \qquad a_t := \rho + (1-\rho)q_t,
\]
and recall the estimator
\begin{equation}
    \widehat d_t(i) = A_t \cdot \frac{y_t \Ind\{h^i \in R_t\}}{a_t + \eta y_t}.
    \label{eq:appendix-estimator}
\end{equation}

\paragraph{Overview of the proof.} The argument proceeds in four steps. We first compute the conditional probability of the reveal event (Lemma~\ref{lem:reveal-probability}). We then establish two elementary properties of the estimator: a constant second moment and a small downward bias (Lemma~\ref{lem:second-moment-bias}). The third step is an exponential moment bound (Lemma~\ref{lem:exponential-moment}) which yields uniform control of the estimation error over all hypotheses (Corollary~\ref{cor:uniform-deviation}). Finally, we combine these ingredients with a standard Hedge potential bound (Lemma~\ref{lem:hedge-potential}) to obtain the regret guarantee.

\paragraph{Step 1.} We begin by identifying the conditional probability that the learner observes a reveal.

\begin{lemma}\label{lem:reveal-probability}
For every round $t$, $\Pr(A_t = 1 \mid \mc G_t) = a_t$.
\end{lemma}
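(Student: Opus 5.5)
We condition on $\mc G_t$, under which $(x_t,y_t)$, $p_t$ and $R_t$ are fixed, so the only remaining randomness is the coin $B_t\sim\mathrm{Bernoulli}(\rho)$ and, when $B_t=0$, the draw $h_t\sim p_t$. Both are independent of $(x_t,y_t)$ given $\mc F_{t-1}$: the adversary chooses the agent after seeing $\mc D_t$ but before the sample. We split by the law of total probability:
\[
\Pr(A_t=1\mid\mc G_t)=\rho\cdot\Pr(A_t=1\mid \mc G_t,B_t=1)+(1-\rho)\cdot\Pr(A_t=1\mid\mc G_t,B_t=0).
\]
When $B_t=1$ we have $h_t=h^+$, and $A_t=1$ holds by definition, so the first conditional probability is $1$.

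For $B_t=0$, $h_t=h^i$ with probability $p_t(i)$. The key step is a characterization: for a hypothesis $h$ and agent $x_t$, we have $h(z_t)=-1$ if and only if $h$ labels all of $N_G[x_t]$ negative, i.e.\ $h\in R_t$.

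To see this, suppose first that $h\in R_t$. Then $\mathrm{BR}_{G,h}(x_t)=\emptyset$, so by Definition~\ref{def:br} the agent does not move: $z_t=x_t$ and $h(z_t)=-1$.

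Conversely, suppose some $x'\in N_G[x_t]$ has $h(x')=+1$. Then $\mathrm{BR}_{G,h}(x_t)\neq\emptyset$. Either $x_t$ is itself positive and $z_t=x_t$, or the agent moves to some $z_t\in\mathrm{BR}_{G,h}(x_t)$. In both cases $h(z_t)=+1$.

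The condition ``$h_t=h^+$'' cannot occur here, assuming $h^+\notin\mc H$. If $h^+\in\mc H$, it is never in $R_t$ and yields the same reveal, which we handle by identifying it with the exploration branch. A cleaner convention is to read $A_t$ as triggered exactly by the coin or by a negative prediction, which is what Algorithm~\ref{alg:improper} checks.

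With the characterization, $\Pr(A_t=1\mid\mc G_t,B_t=0)=\sum_{h^i\in R_t}p_t(i)=p_t(R_t)=q_t$. Substituting into the split gives $\rho+(1-\rho)q_t=a_t$.

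The only delicate point is the characterization. It relies on the tie-breaking rule only through the fact that the agent always moves to a positively labeled point whenever one is reachable; since Definition~\ref{def:br} guarantees this, I expect no real obstacle beyond stating the independence of the learner's sample from $(x_t,y_t)$ carefully.
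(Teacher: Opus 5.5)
Your proof is correct and follows the paper's argument. You condition on $\mc G_t$, note that playing $h^+$ always reveals, and show that a sampled $h^i\in\mc H$ gives $h^i(z_t)=-1$ exactly when $h^i\in R_t$, so the reveal probability is $\rho+(1-\rho)p_t(R_t)=a_t$.

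Your aside about $h^+\in\mc H$ is not needed, since the paper treats $h^+$ as an improper classifier outside $\mc H$. If you did want to cover that case, merely identifying a sampled $h^+$ with the exploration branch would change the probability to $\rho+(1-\rho)(q_t+p_t(h^+))$. Only your coin-based reading of $A_t$ keeps the value $a_t$, and that is not literally what Algorithm~\ref{alg:improper} checks, since it tests $h_t=h^+$ rather than the coin.
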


\begin{proof}
Condition on $\mc G_t$. If the learner plays $h^+$, the agent has no incentive to manipulate (since $h^+(x_t) = +1$), so $z_t = x_t$ and $A_t = 1$. If instead the learner plays $h^i \in \mc H$, then $A_t = 1$ exactly when $h^i(z_t) = -1$. We claim this is equivalent to $h^i \in R_t$. If $h^i \in R_t$, then $h^i$ labels every point in $N_G[x_t]$ as $-1$, so the agent has no profitable manipulation and $z_t = x_t$, giving $h^i(z_t) = -1$. Conversely, if $h^i \notin R_t$, then either $h^i(x_t) = +1$ (the agent does not move and receives prediction $+1$) or some neighbor of $x_t$ is labeled $+1$ (the agent moves there and receives $+1$); in both cases $h^i(z_t) = +1$. Combining, $\Pr(A_t = 1 \mid \mc G_t) = \rho + (1-\rho)p_t(R_t) = a_t$.
\end{proof}

\paragraph{Step 2.} The next two estimates control how $\widehat d_t$ relates to $d_t$ on average. The second-moment bound \eqref{eq:appendix-second-moment} controls the variance term in the Hedge analysis, while the bias bound \eqref{eq:appendix-bias} controls the gap between the cumulative loss $\bar L_T$ and its estimate $\widehat{\bar L}_T$.

\begin{lemma}\label{lem:second-moment-bias}
For every round $t$,
\begin{equation}
    \E\!\left[\inner{p_t}{\widehat d_t^{\,2}} \mid \mc G_t\right] \leq 4,
    \label{eq:appendix-second-moment}
\end{equation}
and
\begin{equation}
    \E\!\left[\inner{p_t}{d_t - \widehat d_t} \mid \mc G_t\right] \leq 2\eta.
    \label{eq:appendix-bias}
\end{equation}
\end{lemma}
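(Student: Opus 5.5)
Both bounds follow by conditioning on $\mc G_t$, which fixes $(x_t,y_t)$, $p_t$, $R_t$, $q_t=p_t(R_t)$ and $a_t=\rho+(1-\rho)q_t$. All remaining randomness then sits in the indicator $A_t$, and by Lemma~\ref{lem:reveal-probability} we have $\E[A_t\mid\mc G_t]=a_t$. Since $\widehat d_t(i)$ is $A_t$ times a $\mc G_t$-measurable quantity, both expectations reduce to closed forms.

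\emph{Second moment.} Using $A_t^2=A_t$ and $y_t^2=1$,
\[
\E\!\left[\inner{p_t}{\widehat d_t^{\,2}}\mid\mc G_t\right]=\sum_{h^i\in R_t}p_t(i)\,\frac{a_t}{(a_t+\eta y_t)^2}=\frac{a_t q_t}{(a_t+\eta y_t)^2}.
\]
The bound needs two facts. First, $q_t\le a_t$, because $a_t$ is a convex combination of $1$ and $q_t$ and $q_t\le1$. Second, $a_t+\eta y_t\ge a_t-\eta\ge a_t/2$, because $a_t\ge\rho=2\eta$. Together these give at most $a_t^2/(a_t/2)^2=4$. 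For $y_t=+1$ the denominator is at least $a_t^2$, so the bound is even $1$.

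\emph{Bias.} We have
\[
\E[\inner{p_t}{\widehat d_t}\mid\mc G_t]=\frac{y_t q_t a_t}{a_t+\eta y_t},\qquad \inner{p_t}{d_t}=y_t q_t.
\]
Their difference is
\[
y_tq_t\Bigl(1-\frac{a_t}{a_t+\eta y_t}\Bigr)=\frac{\eta y_t^2 q_t}{a_t+\eta y_t}=\frac{\eta q_t}{a_t+\eta y_t}.
\]
Bounding the denominator below by $a_t/2$ and using $q_t\le a_t$ gives at most $2\eta$. If $q_t=0$ the quantity is $0$. Note that the bias is nonnegative for both labels, which is consistent with the direction stated.

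The only delicate point is that the denominator stays positive and comparable to $a_t$ when $y_t=-1$. This is exactly where the choice $\rho=2\eta$ is used. Otherwise the argument is a direct computation.
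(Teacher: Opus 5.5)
Your proof is correct and follows the paper's argument. You condition on $\mc G_t$ and use $A_t\mid\mc G_t\sim\mathrm{Bernoulli}(a_t)$ to get the closed forms $a_tq_t/(a_t+\eta y_t)^2$ and $\eta q_t/(a_t+\eta y_t)$, then finish with $q_t\le a_t$ and $a_t-\eta\ge a_t/2$; the only cosmetic difference is that you treat both labels at once via $a_t+\eta y_t\ge a_t/2$, whereas the paper splits on $y_t=\pm1$ and records the sharper constants $1$ and $\eta$ for $y_t=+1$.
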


\begin{proof}
Let $q_t = p_t(R_t)$ and $a = a_t$. By Lemma~\ref{lem:reveal-probability}, $A_t \mid \mc G_t \sim \mathrm{Bernoulli}(a)$. For the second moment, using \eqref{eq:appendix-estimator},
\[
    \E\!\left[\inner{p_t}{\widehat d_t^{\,2}} \mid \mc G_t\right]
    = \sum_{h^i \in R_t} p_t(i) \frac{a}{(a+\eta y_t)^2}
    = \frac{a q_t}{(a+\eta y_t)^2}.
\]
If $y_t = +1$, then $a q_t / (a+\eta)^2 \leq q_t/a \leq 1$, where the last inequality uses $a = \rho + (1-\rho)q_t \geq q_t$. If $y_t = -1$, then $a \geq \rho = 2\eta$, so $a - \eta \geq a/2$ and hence $a q_t / (a-\eta)^2 \leq 4 q_t/a \leq 4$. This proves \eqref{eq:appendix-second-moment}. For the bias bound, $h^i \notin R_t$ contributes zero, while for $h^i \in R_t$,
\[
    d_t(i) = y_t, \qquad \E[\widehat d_t(i) \mid \mc G_t] = \frac{a y_t}{a + \eta y_t}.
\]
Therefore
\[
    \E\!\left[\inner{p_t}{d_t - \widehat d_t} \mid \mc G_t\right]
    = \sum_{h^i \in R_t} p_t(i) \left(y_t - \frac{a y_t}{a + \eta y_t}\right).
\]
If $y_t = +1$, this is $q_t \cdot \eta/(a+\eta) \leq \eta$. If $y_t = -1$, this is $\eta q_t / (a-\eta) \leq 2\eta q_t/a \leq 2\eta$, where we used $a-\eta \geq a/2$ and $q_t \leq a$.
\end{proof}

\paragraph{Step 3.} The bias bound from Step 2 controls $\bar L_T - \widehat{\bar L}_T$ on average. To compare $\widehat L_T(i^*)$ to $L_T(i^*)$ for the random comparator $i^*$, however, we need a stronger guarantee: a uniform bound on $\max_i (\widehat L_T(i) - L_T(i))$. Towards that end, we first show an exponential moment bound.

\begin{lemma}\label{lem:exponential-moment}
For every fixed $i \in [n]$ and every round $t$,
\begin{equation}
    \E\!\left[\exp\left(\eta(\widehat d_t(i) - d_t(i))\right) \mid \mc G_t\right] \leq 1.
    \label{eq:appendix-one-step-exp}
\end{equation}
Consequently, defining $M_t(i) := \exp(\eta \sum_{s=1}^t (\widehat d_s(i) - d_s(i)))$,
\begin{equation}
    \E[M_T(i)] \leq 1 \qquad \text{for every } i \in [n].
    \label{eq:appendix-exp-moment}
\end{equation}
\end{lemma}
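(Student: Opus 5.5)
The plan is to verify \eqref{eq:appendix-one-step-exp} directly by conditioning on $\mc G_t$ and splitting into cases according to whether $h^i\in R_t$. Then I will obtain \eqref{eq:appendix-exp-moment} from the one-step bound by the tower property.

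\emph{Case $h^i\notin R_t$.} Here $d_t(i)=0$ and $\widehat d_t(i)=0$ deterministically. The exponential equals $1$, so the bound holds with equality.

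\emph{Case $h^i\in R_t$.} Here $d_t(i)=y_t$. By Lemma~\ref{lem:reveal-probability}, $A_t\mid\mc G_t\sim\mathrm{Bernoulli}(a)$ with $a=a_t$. Set
\[
u:=\frac{\eta y_t}{a+\eta y_t}.
\]
Then
\[
\E\!\left[\exp\left(\eta(\widehat d_t(i)-d_t(i))\right)\mid\mc G_t\right]=e^{-\eta y_t}\left(1-a+a e^{u}\right),
\]
so it suffices to show $1-a+ae^{u}\le e^{\eta y_t}$. The key elementary inequality is $e^u\le \frac{1}{1-u}=1+\frac{u}{1-u}$, valid for every $u<1$, including negative $u$. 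We have
\[
1-u=\frac{a}{a+\eta y_t},\qquad \frac{u}{1-u}=\frac{\eta y_t}{a}.
\]
This is exactly where the $\eta y_t$ correction in the denominator pays off. Hence $a e^u\le a+\eta y_t$, and so
\[
1-a+ae^u\le 1+\eta y_t\le e^{\eta y_t}.
\]
It remains to check $u<1$. For $y_t=+1$ we have $u=\eta/(a+\eta)<1$. For $y_t=-1$, $u=-\eta/(a-\eta)<0$, and this is well defined because $a\ge\rho=2\eta>\eta$.

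For the consequence, write $M_t(i)=M_{t-1}(i)\exp(\eta(\widehat d_t(i)-d_t(i)))$ with $M_0(i)=1$. Since $M_{t-1}(i)$ is $\mc F_{t-1}$-measurable and $\mc F_{t-1}\subseteq\mc G_t$, conditioning on $\mc G_t$ and applying \eqref{eq:appendix-one-step-exp} gives $\E[M_t(i)\mid\mc G_t]\le M_{t-1}(i)$. Taking expectations gives $\E[M_t(i)]\le\E[M_{t-1}(i)]$, and iterating down to $t=0$ yields $\E[M_T(i)]\le 1$. The adaptivity of the adversary causes no problem here, because $(x_t,y_t)$ are $\mc G_t$-measurable.

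The only delicate step is the one-step inequality for $y_t=-1$, where the estimator is negative and unbounded-looking. Choosing the bound $e^u\le 1/(1-u)$, rather than a second-order Taylor bound, makes the algebra close exactly. The requirement $\rho=2\eta$ guarantees that the denominator $a_t+\eta y_t$ stays positive, which is what makes this bound available.
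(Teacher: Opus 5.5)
Your proof is correct and is essentially the paper's argument. The paper treats $y_t=+1$ and $y_t=-1$ as separate cases, using $e^r\le 1/(1-r)$ for the first and the equivalent form $-\log(1-r)\le r/(1-r)$ for the second; your signed variable $u=\eta y_t/(a+\eta y_t)$ with the single bound $e^u\le 1/(1-u)$ merges the two cases, and your supermartingale/tower step matches the paper's.
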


\begin{proof}
Fix $i \in [n]$ and condition on $\mc G_t$. If $h^i \notin R_t$, both $d_t(i)$ and $\widehat d_t(i)$ vanish, and \eqref{eq:appendix-one-step-exp} is immediate. Suppose $h^i \in R_t$, and write $a = a_t$.

\smallskip\noindent\emph{Case $y_t = +1$.} Here $d_t(i) = 1$ and $\widehat d_t(i) = A_t/(a+\eta)$, so
\[
    \E\!\left[e^{\eta(\widehat d_t(i) - d_t(i))} \mid \mc G_t\right]
    = e^{-\eta}\left[1 - a + a \exp\left(\frac{\eta}{a+\eta}\right)\right].
\]
Setting $r = \eta/(a+\eta) \in [0, 1)$, the inequality $e^r \leq 1/(1-r)$ gives $\exp(\eta/(a+\eta)) \leq (a+\eta)/a$, and so
\[
    1 - a + a \exp\left(\frac{\eta}{a+\eta}\right) \leq 1 - a + (a+\eta) = 1 + \eta \leq e^\eta.
\]
Thus the conditional expectation is at most $e^{-\eta} \cdot e^\eta = 1$.

\smallskip\noindent\emph{Case $y_t = -1$.} Here $d_t(i) = -1$ and $\widehat d_t(i) = -A_t/(a-\eta)$, so
\[
    \E\!\left[e^{\eta(\widehat d_t(i) - d_t(i))} \mid \mc G_t\right]
    = e^{\eta}\left[1 - a + a \exp\left(-\frac{\eta}{a-\eta}\right)\right].
\]
Setting $r = \eta/a \in [0, 1/2]$ (since $a \geq 2\eta$), we have $\eta/(a-\eta) = r/(1-r)$. The inequality $-\log(1-r) \leq r/(1-r)$ implies $\exp(-r/(1-r)) \leq 1 - r$, equivalently $\exp(-\eta/(a-\eta)) \leq 1 - \eta/a$. Hence
\[
    1 - a + a \exp\left(-\frac{\eta}{a-\eta}\right) \leq 1 - a + a(1 - \eta/a) = 1 - \eta \leq e^{-\eta},
\]
and the conditional expectation is again at most $e^{\eta} \cdot e^{-\eta} = 1$. This proves \eqref{eq:appendix-one-step-exp}.

For \eqref{eq:appendix-exp-moment}, let $S_t(i) := \sum_{s=1}^t (\widehat d_s(i) - d_s(i))$ and note $M_t(i) = e^{\eta S_t(i)}$. Since $S_{t-1}(i)$ is $\mc F_{t-1}$-measurable and $\mc F_{t-1} \subseteq \mc G_t$,
\[
    \E[M_t(i) \mid \mc G_t] = M_{t-1}(i) \cdot \E\!\left[e^{\eta(\widehat d_t(i) - d_t(i))} \mid \mc G_t\right] \leq M_{t-1}(i).
\]
Taking conditional expectation over $\mc F_{t-1}$ via the tower rule shows $(M_t(i))_{t \geq 0}$ is a nonnegative supermartingale, so $\E[M_T(i)] \leq M_0(i) = 1$.
\end{proof}

The exponential moment bound \eqref{eq:appendix-exp-moment} immediately yields uniform control of the deviation via the log-sum-exp inequality.

\begin{corollary}\label{cor:uniform-deviation}
Let $L_T(i) := \sum_{t=1}^T d_t(i)$ and $\widehat L_T(i) := \sum_{t=1}^T \widehat d_t(i)$. Then
\begin{equation}
    \E\!\left[\max_{i \in [n]} \bigl(\widehat L_T(i) - L_T(i)\bigr)\right] \leq \frac{\log n}{\eta}.
    \label{eq:appendix-uniform-deviation}
\end{equation}
\end{corollary}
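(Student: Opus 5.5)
The corollary follows from Lemma~\ref{lem:exponential-moment} through the standard log-sum-exp (soft-max) bound combined with Jensen's inequality. Write $S_T(i) := \widehat L_T(i) - L_T(i)$, so that $M_T(i) = e^{\eta S_T(i)}$. Since $\eta > 0$, the map $s \mapsto e^{\eta s}$ is increasing, and the maximum of nonnegative terms is at most their sum. Hence, pointwise on every sample path,
\[
\exp\Bigl(\eta \max_{i\in[n]} S_T(i)\Bigr) = \max_{i\in[n]} M_T(i) \le \sum_{i=1}^n M_T(i).
\]

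Taking logarithms gives $\max_i S_T(i) \le \frac{1}{\eta}\log \sum_{i} M_T(i)$. Next I take expectations. Because $\log$ is concave, Jensen's inequality yields $\E[\log \sum_i M_T(i)] \le \log \E[\sum_i M_T(i)]$. By linearity of expectation and \eqref{eq:appendix-exp-moment} applied to each fixed $i$, this is at most $\log(n \cdot 1) = \log n$. Dividing by $\eta$ gives \eqref{eq:appendix-uniform-deviation}.

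I do not see a real obstacle. The only points to check are measurability and integrability. Each $\widehat d_t(i)$ is bounded by $1/\eta$ and each $d_t(i)$ lies in $[-1,1]$, so every $S_T(i)$ is bounded. Consequently all expectations above are finite, and Jensen's inequality applies. The adversary's adaptivity is already handled inside Lemma~\ref{lem:exponential-moment}, whose conditioning on $\mc G_t$ accounts for it. As a result, the bound $\E[M_T(i)]\le 1$ holds unconditionally for each fixed $i$. This is all that is needed, since the maximum over $i$ is taken inside the expectation only after the sum bound, and the sum bound holds pointwise on every path.
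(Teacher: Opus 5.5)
Your proof is correct and follows the paper's argument: the pointwise log-sum-exp bound $\max_i S_T(i) \le \frac{1}{\eta}\log\sum_i M_T(i)$, then Jensen's inequality for the concave logarithm, then $\E[M_T(i)]\le 1$ from Lemma~\ref{lem:exponential-moment}. Your integrability remark, that $\widehat d_t(i)$ is bounded by $1/\eta$ so every $S_T(i)$ is bounded, is a small check the paper leaves implicit.
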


\begin{proof}
By log-sum-exp,
\[
    \max_{i \in [n]} \bigl(\widehat L_T(i) - L_T(i)\bigr) \leq \frac{1}{\eta} \log \sum_{i=1}^n \exp\left(\eta(\widehat L_T(i) - L_T(i))\right).
\]
Taking expectations and using Jensen's inequality and \eqref{eq:appendix-exp-moment},
\[
    \E\!\left[\max_{i \in [n]} \bigl(\widehat L_T(i) - L_T(i)\bigr)\right]
    \leq \frac{1}{\eta} \log \sum_{i=1}^n \E[M_T(i)]
    \leq \frac{\log n}{\eta}.
\]
\end{proof}

\paragraph{Step 4.}  The remaining ingredient is the standard Hedge guarantee, which compares the algorithm's loss on the estimators to that of any fixed expert. For completeness we give the argument below.

\begin{lemma}\label{lem:hedge-potential}
For every $j \in [n]$,
\begin{equation}
    \widehat{\bar L}_T - \widehat L_T(j) \leq \frac{\log n}{\eta} + \eta \sum_{t=1}^T \inner{p_t}{\widehat d_t^{\,2}},
    \label{eq:appendix-hedge}
\end{equation}
where $\widehat{\bar L}_T := \sum_{t=1}^T \inner{p_t}{\widehat d_t}$.
\end{lemma}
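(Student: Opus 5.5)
The plan is the standard potential-function argument for Hedge, with the potential $\Phi_t := \log W_t$. The one departure from the textbook case is that the estimated losses $\widehat d_t(i)$ may be negative, namely when $y_t=-1$. Boundedness therefore has to be used in the form $\eta\,\widehat d_t(i) \ge -1$, which follows from the bound $|\widehat d_t(i)| \le 1/\eta$ established earlier. That bound holds because $a_t \ge \rho = 2\eta$, so $a_t + \eta y_t \ge \eta$. Note also that the update $w_{t+1}(i) = w_t(i)\exp(-\eta\widehat d_t(i))$ is applied every round, with $\widehat d_t = 0$ on non-reveal rounds, so the telescoping below runs over all $t$ with no special cases.

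\textbf{Upper bound on the potential.} For each round I would write
\[
\frac{W_{t+1}}{W_t} = \sum_{i=1}^n p_t(i)\, e^{-\eta \widehat d_t(i)}.
\]
I would then apply the elementary inequality $e^{-x} \le 1 - x + x^2$, valid for all $x \ge -1$, with $x = \eta\widehat d_t(i)$. This gives
\[
\frac{W_{t+1}}{W_t} \le 1 - \eta\inner{p_t}{\widehat d_t} + \eta^2\inner{p_t}{\widehat d_t^{\,2}}.
\]
Using $\log(1+u) \le u$, valid because the left-hand side is positive, yields
\[
\log\frac{W_{t+1}}{W_t} \le -\eta\inner{p_t}{\widehat d_t} + \eta^2\inner{p_t}{\widehat d_t^{\,2}}.
\]
Summing over $t=1,\dots,T$ and using $W_1 = n$ gives
\[
\log W_{T+1} \le \log n - \eta\,\widehat{\bar L}_T + \eta^2\sum_{t=1}^T\inner{p_t}{\widehat d_t^{\,2}}.
\]

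\textbf{Lower bound and conclusion.} For any fixed $j$, I would use $W_{T+1} \ge w_{T+1}(j) = \exp(-\eta\widehat L_T(j))$, so that $\log W_{T+1} \ge -\eta \widehat L_T(j)$. Combining this with the upper bound and dividing by $\eta$ gives exactly \eqref{eq:appendix-hedge}. The argument is deterministic, holding pathwise for every realization, so no expectations are needed at this stage. Expectations enter only later, when Lemma~\ref{lem:second-moment-bias} is applied to the quadratic term.

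\textbf{Main obstacle.} The only delicate step is justifying the quadratic bound on $e^{-x}$ for negative arguments. I would verify it by setting $g(x) = 1 - x + x^2 - e^{-x}$. One has $g(0)=0$ and $g'(x) = -1 + 2x + e^{-x}$, which is nonnegative for $x\ge 0$ and nonpositive on $[-1,0]$. Hence $g \ge 0$ on $[-1,\infty)$; at the endpoint, $g(-1) = 3 - e > 0$. This is exactly why the denominator $a_t + \eta y_t$ was kept at least $\eta$: it ensures $x = \eta\widehat d_t(i) \ge -1$.
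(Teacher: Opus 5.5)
Your proof is correct and follows the paper's argument essentially step for step. Both use the potential $\log W_t$ and the quadratic bound $e^{-x}\le 1-x+x^2$ for $x\ge -1$, which is justified by $\eta\widehat d_t(i)\ge -1$ (the paper checks this directly via $\eta/(a_t-\eta)\le \eta/(\rho-\eta)=1$), then $\log(1+u)\le u$, telescoping with $W_1=n$, and $W_{T+1}\ge w_{T+1}(j)$. Your explicit verification of the elementary inequality through $g$ is a small addition that the paper omits.
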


\begin{proof}
We first verify $-\eta \widehat d_t(i) \leq 1$, which is what allows us to use the inequality $e^x \leq 1 + x + x^2$. If $\widehat d_t(i) \geq 0$ this is immediate. If $\widehat d_t(i) < 0$, then necessarily $y_t = -1$, $A_t = 1$, and $h^i \in R_t$, so
\[
    -\eta \widehat d_t(i) = \frac{\eta}{a_t - \eta} \leq \frac{\eta}{\rho - \eta} = 1.
\]

Let $W_t = \sum_i w_t(i)$. Using $e^x \leq 1 + x + x^2$ for $x \leq 1$ and then $1+x \leq e^x$,
\[
    \frac{W_{t+1}}{W_t} = \sum_{i=1}^n p_t(i) e^{-\eta \widehat d_t(i)} \leq 1 - \eta \inner{p_t}{\widehat d_t} + \eta^2 \inner{p_t}{\widehat d_t^{\,2}},
\]
and so
\[
    \log \frac{W_{t+1}}{W_t} \leq -\eta \inner{p_t}{\widehat d_t} + \eta^2 \inner{p_t}{\widehat d_t^{\,2}}.
\]
Summing over $t$ and using $W_1 = n$ and $W_{T+1} \geq w_{T+1}(j) = \exp(-\eta \widehat L_T(j))$ for any fixed $j$,
\[
    -\eta \widehat L_T(j) - \log n \leq -\eta \widehat{\bar L}_T + \eta^2 \sum_{t=1}^T \inner{p_t}{\widehat d_t^{\,2}}.
\]
Rearranging gives \eqref{eq:appendix-hedge}.
\end{proof}

\subsubsection{Completing the Proof}

\begin{proof}[Proof of Theorem \ref{thm:improper-upper-bound}.]
Define the cumulative shifted losses
\[
    \bar L_T := \sum_{t=1}^T \inner{p_t}{d_t}, \qquad L_T(i) := \sum_{t=1}^T d_t(i),
\]
together with their estimated counterparts $\widehat{\bar L}_T$ and $\widehat L_T(i)$ as above. Let $i^* \in \arg\min_{i \in [n]} L_T(i)$, with deterministic tie-breaking. Since $d_t(h^+) = 0$ and the algorithm samples from $p_t$ with probability $1-\rho$,
\[
    \E[d_t(h_t) \mid \mc G_t] = (1-\rho) \inner{p_t}{d_t},
\]
so $\E[\sum_t d_t(h_t)] = (1-\rho) \E[\bar L_T]$. The expected shifted-loss regret therefore decomposes as
\[
    \E\!\left[\sum_{t=1}^T d_t(h_t) - \min_{i \in [n]} L_T(i)\right]
    = \E[\bar L_T - L_T(i^*)] - \rho \E[\bar L_T].
\]
Since $|d_t(i)| \leq 1$ for every $i, t$, we have $|\bar L_T| \leq T$, and so
\begin{equation}
    \E\!\left[\sum_{t=1}^T d_t(h_t) - \min_{i \in [n]} L_T(i)\right]
    \leq \E[\bar L_T - L_T(i^*)] + \rho T.
    \label{eq:appendix-exploration-cost}
\end{equation}
The remainder of the proof bounds the main term $\E[\bar L_T - L_T(i^*)]$. We decompose
\[
    \bar L_T - L_T(i^*) = \underbrace{\bigl(\widehat{\bar L}_T - \widehat L_T(i^*)\bigr)}_{\text{Hedge}} + \underbrace{\bigl(\bar L_T - \widehat{\bar L}_T\bigr)}_{\text{bias}} + \underbrace{\bigl(\widehat L_T(i^*) - L_T(i^*)\bigr)}_{\text{deviation}}
\]
and bound each term using the lemmas of the previous steps. Applying Lemma \ref{lem:hedge-potential} to $i^*$ and using \eqref{eq:appendix-second-moment}, we have
\[
    \E\!\left[\widehat{\bar L}_T - \widehat L_T(i^*)\right] \leq \frac{\log n}{\eta} + \eta \sum_{t=1}^T \E\!\left[\inner{p_t}{\widehat d_t^{\,2}}\right] \leq \frac{\log n}{\eta} + 4\eta T.
\]
For the bias term, \eqref{eq:appendix-bias} gives
\[
    \E\!\left[\bar L_T - \widehat{\bar L}_T\right] = \sum_{t=1}^T \E\!\left[\inner{p_t}{d_t - \widehat d_t}\right] \leq 2\eta T.
\]
Finally, for the deviation term, Corollary~\ref{cor:uniform-deviation} gives
\[
    \E\!\left[\widehat L_T(i^*) - L_T(i^*)\right] \leq \E\!\left[\max_{i \in [n]} \bigl(\widehat L_T(i) - L_T(i)\bigr)\right] \leq \frac{\log n}{\eta}.
\]
Summing the three bounds,
\begin{equation}
    \E[\bar L_T - L_T(i^*)] \leq \frac{2\log n}{\eta} + 6\eta T.
    \label{eq:appendix-main-term}
\end{equation}
Plugging \eqref{eq:appendix-main-term} into \eqref{eq:appendix-exploration-cost},
\[
    \E\!\left[\sum_{t=1}^T d_t(h_t) - \min_{i \in [n]} L_T(i)\right] \leq \frac{2\log n}{\eta} + 6\eta T + \rho T.
\]
With $\eta = \tfrac{1}{2}\sqrt{\log n / T}$ and $\rho = \sqrt{\log n / T}$,
\[
    \frac{2 \log n}{\eta} = 4\sqrt{T \log n}, \qquad 6\eta T = 3\sqrt{T \log n}, \qquad \rho T = \sqrt{T \log n},
\]
giving the bound $8\sqrt{T \log n}$. Since shifted-loss regret equals strategic-loss regret for every realization of the randomness, this completes the proof.
\end{proof}

\subsection{Proof of Theorem \ref{thm:agnostic-lower-bound}.}\label{prf:agnostic-lower-bound-case-analysis}
We prove that 
\[
\Omega\!\left(\max\!\left\{\sqrt{T \log n},\ \min\{n,T\}\right\}\right)
=
\Omega\!\left(\min\!\left\{\sqrt{T \log n} + n,\ \sqrt{T n \log n}\right\}\right).
\]
\begin{proof} Let $n\ge 2,\, T\ge 1,\, L:=\log n$, and $s:=\sqrt{TL}$. Define
\[
A=\min\{T,\max\{\min\{n,T\},s\}\}, \qquad 
B=\min\{T,\, n+s,\, \sqrt n\, s\}.
\]
We show that $\frac12 A\le B\le 2A$ which will imply the desired equality. \footnote{Note for every regret bound there is always an implicit $\min\{T, \cdot\}$.}

\paragraph{Case 1: $s\ge T$.} 
In this case, $\max\{\min\{n,T\}, s\} = s \ge T$, so $A = \min\{T, s\} = T$. 
For $B$, we have $n+s \ge T$ and $\sqrt{n}s \ge \sqrt{2}T > T$, so $B = T$. 
Thus $A=B$, and the inequality holds.

\paragraph{Case 2: $s<T$.} 
Here $\max\{\min\{n,T\}, s\} \le T$, so the outer minimum in $A$ is redundant. 
Thus $A = \max\{\min\{n,T\}, s\}$. We distinguish three subcases.

\medskip
\noindent\textbf{Case 2a: $n\ge T$.} 
Then $\min\{n, T\} = T$, so $A = \max\{T, s\} = T$ (since $s < T$). 
For $B$, we note $n+s \ge T$. Also, $\sqrt{TnL} \ge \sqrt{T^2 L} = T\sqrt{L}$. 
Since $n \ge 2$, $\sqrt{L} \ge \sqrt{\log 2} > \frac{1}{2}$, so $\sqrt{n}s \ge \frac{1}{2}T$. 
Therefore, $B = \min(T, n+s, \sqrt{n}s)$ satisfies $\frac{1}{2}T \le B \le T$, or $\frac{1}{2}A \le B \le A$.

\medskip
\noindent\textbf{Case 2b: $n<T$ and $n\ge s$.} 
Then $\min\{n, T\} = n$, so $A = \max\{n, s\} = n$. 
Since $s \le n$, we have $B \le n+s \le 2n = 2A$. 
For the lower bound, $n+s \ge n$ and $\sqrt{TnL} \ge n\sqrt{L} \ge \frac{1}{2}n$. 
Thus $B \ge \frac{1}{2}n = \frac{1}{2}A$.

\medskip
\noindent\textbf{Case 2c: $n<T$ and $n<s$.} 
Then $\min\{n, T\} = n$, so $A = \max\{n, s\} = s$. 
Since $n < s$, $B \le n+s \le 2s = 2A$. 
For the lower bound, $T > s$, $n+s > s$, and $\sqrt{n}s \ge s$ (since $n \ge 2 \implies \sqrt{n} > 1$). 
Thus every term in the minimum is $\ge s$, so $B \ge s = A$.

\medskip
Therefore, in all cases, $\frac12 A \le B \le 2A$.

\end{proof}


\end{document}